\newcommand{\field}[1]{\mathbb{#1}}
\newcommand{\set}[1]{\mathcal{#1}}
\newcommand{\nats}{\field{N}}
\newcommand{\reals}{\field{R}}
\newcommand{\comment}[1]{}
\newcommand{\drawnfrom}{\sim}       
\newcommand{\expect}{\mathsf{E}}    
\newcommand{\prob}{\mathsf{Pr}}      
\newcommand{\cost}{\mathsf{C}}      
\newtheorem{theorem}{Theorem}
\newenvironment{algoframed}{\begin{mdframed}\begin{algorithm}[H]}{\end{algorithm}\end{mdframed}}
\newenvironment{procframed}{\begin{mdframed}\begin{procedure}[H]}{\end{procedure}\end{mdframed}}
\begin{document}

\title{Information-Theoretic Bounded Rationality}

\author{%
  \name Pedro A. Ortega \email ope@seas.upenn.edu\\
  \addr University of Pennsylvania\\
  Philadelphia, PA 19104, USA
  \AND
  \name Daniel A. Braun \email daniel.braun@tuebingen.mpg.de\\
  \addr Max Planck Institute for Intelligent Systems\\
  Max Planck Institute for Biological Cybernetics\\
  72076 T\"ubingen, Germany
  \AND
  \name Justin Dyer \email jsdyer@google.com\\
  \addr Google Inc.\\
  Mountain View, CA 94043, USA
  \AND
  \name Kee-Eung Kim \email kekim@cs.kaist.ac.kr\\
  \addr Korea Advanced Institute of Science and Technology\\
  Daejeon, Korea 305-701
  \AND
  \name Naftali Tishby \email tishby@cs.huji.ac.il\\
  \addr The Hebrew University\\
  Jerusalem, 91904, Israel
}


\maketitle

\begin{abstract}%
Bounded rationality, that is, decision-making and planning under resource limitations, is widely regarded as an important open problem in artificial intelligence, reinforcement learning, computational neuroscience and economics. This paper offers a consolidated presentation of a theory of bounded rationality based on information-theoretic ideas. We provide a conceptual justification for using the free energy functional as the objective function for characterizing bounded-rational decisions. This functional possesses three crucial properties: it controls the size of the solution space; it has Monte Carlo planners that are exact, yet bypass the need for exhaustive search; and it captures model uncertainty arising from lack of evidence or from interacting with other agents having unknown intentions. We discuss the single-step decision-making case, and show how to extend it to sequential decisions using equivalence transformations. This extension yields a very general class of decision problems that encompass classical decision rules (e.g.~\textsc{Expectimax} and \textsc{Minimax}) as limit cases, as well as trust- and risk-sensitive planning. 
\end{abstract}


\newpage

\begin{footnotesize}
\tableofcontents
\end{footnotesize}

\newpage


\section{Introduction}

It is hard to overstate the influence that the economic idea of \textit{perfect rationality} has had on our way of designing artificial agents \citep{RussellNorvig2010}. Today, many of us in the fields of artificial intelligence, control theory, and reinforcement learning, design our agents by encoding the desired behavior into an objective function that \emph{the agents must optimize in expectation}. By doing so, we are relying on the theory of \textit{subjective expected utility} (SEU), the standard economic theory of decision making under uncertainty \citep{Neumann1944, Savage1954}. SEU theory has an immense intuitive appeal, and its pervasiveness in today's mindset is reflected in many widely-spread beliefs: \textit{e.g.} that probabilities and utilities are orthogonal concepts; that two options with the same expected utility are equivalent; and that randomizing can never improve upon an optimal deterministic choice. Put simply, if we find ourselves violating SEU theory, we would feel strongly compelled to revise our choice.

Simultaneously, it is also well-understood that SEU theory prescribes policies that are intractable to calculate save for very restricted problem classes. This was recognized soon after expected utility theory was formulated \citep{Simon1956}. In agent design, it became especially apparent more recently, as we continue to struggle in tackling problems of moderate complexity in spite of our deeper understanding of the planning problem \citep{Duff2002, Hutter2004, Legg2008, Ortega2011} and the vast computing power available to us. For instance, there are efficient algorithms to calculate the optimal policy of a \emph{known} Markov decision process (MDP) \citep{Bertsekas1996}, but no efficient algorithm to calculate the exact optimal policy of an \emph{unknown} MDP or a \emph{partially observable} MDP \citep{PapadimitriouTsitsiklis1987}. Due to this, in practice we either make severe domain-specific \emph{simplifications}, as in linear-quadratic-Gaussian control problems \citep{Stengel1994}; or we \emph{approximate} the ``gold standard'' prescribed by SEU theory, exemplified by the reinforcement learning algorithms based on stochastic approximations \citep{Sutton1998, Szepesvari2010} and Monte-Carlo tree search \citep{Kocsis2006, Veness2011, Mnih2015}.

Recently, there has been a renewed interested in models of \emph{bounded rationality} \citep{Simon1972}. Rather than approximating perfect rationality, these models seek to formalize decision-making with limited resources such as the time, energy, memory, and computational effort allocated for arriving at a decision. The specific way in which this is achieved varies across these accounts. For instance, \textit{epsilon-optimality} only requires policies to be ``close enough'' to the optimum \citep{Dixon2001}; \textit{metalevel rationality} proposes optimizing a trade-off between utilities and computational costs \citep{Zilberstein2008}; \textit{bounded optimality} restricts the computational complexity of the programs implementing the optimal policy \citep{Russell1995b}; an approach that we might label \textit{procedural bounded rationality} attempts to explicitly model the limitations in the decision-making procedures \citep{Rubinstein1998}; and finally, the \textit{heuristics} approach argues that general optimality principles ought to be abandoned altogether in favor of collections of simple heuristics \citep{Gigerenzer2001}.

Here we are concerned with a particular flavor of bounded rationality, which we might call ``information-theoretic'' due to its underlying rationale. While this approach solves many of the shortcomings of perfect rationality in a simple and elegant way, it has not yet attained widespread acceptance from the mainstream community in spite of roughly a decade of research in the machine learning literature. As is the case with many emerging fields of research, this is partly due to the lack of consensus on the interpretation of the mathematical quantities involved. Nonetheless, a great deal of the basics are well-established and ready for their widespread adoption; in particular, some of the algorithmic implications are much better understood today. Our goal here is to provide a consolidated view of some of the basic ideas of the theory and to sketch the intimate connections to other fields. 

\subsection{A Short Algorithmic Illustration}

\paragraph{Perfect Rationality.}
Let $\Pi = \{\pi_1, \pi_2, \ldots, \pi_N\}$ be a finite set of $N \in \mathbb{N}$ candidate choices or policies, and let $U: \Pi \rightarrow [0,1]$ be a utility function mapping each policy into the unit interval.  Consider the problem of finding a maximizing element $\pi^\ast \in \Pi$. For simplicity we assume that, given an element $\pi \in \Pi$, its utility $U(\pi)$ can be evaluated in a constant number of computation steps. Imposing no particular structure on $\Pi$, we find $\pi^\ast$ by sequentially evaluating each utility $U(\pi)$ and returning the best found in the end.

\begin{procframed}
\caption{PerfectlyRationalChoice($\Pi, U$)}
 \DontPrintSemicolon
 $\pi^\ast \leftarrow \pi_1 \in \Pi$ \;
 \ForEach{$\pi \in \{\pi_2, \pi_3, \ldots, \pi_N\}$}{
   \lIf{$U(\pi) \geq U(\pi^\ast)$}{$\pi^\ast \leftarrow \pi$}
 }
 \Return $\pi^\ast$
\end{procframed}

This \textit{exhaustive evaluation} algorithm works well if~$N$ is small, but it does not scale to the case when~$N$ is \emph{very} large. In real-world tasks, such \emph{very large decision spaces} are not the exception but rather the norm, and an agent faces them in most stages of its information-processing pipeline (\textit{e.g.}~attention focus, model selection, action selection). In these cases, the agent does not have enough resources to exhaustively evaluate each element in $\Pi$; rather, it only manages to inspect a negligible subset of $K \ll N$ elements in $\Pi$, where $K$ depends on the agent's limitations.

\paragraph{Bounded Rationality.}
We model this limitation as a constraint on the agent's \emph{information capacity} of relating utility functions with decisions. This limitation changes the nature of the decision problem in a fundamental way; indeed, information constraints yield optimal choice algorithms that look very unusual to someone who is used to the traditional optimization paradigm. 

One simple and general way of implementing the decision process is as follows. A bounded rational agent inspects the choices until it finds one that is \textit{satisficing}. The order of inspection is random: the agent draws (without replacement) the candidate policies from a distribution $Q$ that reflects the \emph{prior knowledge} about the good choices. Then, for each proposed element $\pi \sim Q$, the agent decides whether $\pi$ is good enough using a stochastic criterion that depends both on the utility and the capacity. Importantly, the agent can inspect the policies \emph{in parallel} as is illustrated in the following pseudo-code.
\begin{procframed}
\caption{BoundedRationalChoice($\Pi,U, \alpha, U^\ast$)}
 \DontPrintSemicolon
 \SetKwBlock{llRepeat}{repeat in parallel}{end}
 \llRepeat{
   $\pi \sim Q$ \;
   $u \sim \mathcal{U}[0,1]$ \;
   \lIf{$u \leq \exp\{\alpha (U(\pi) - U^\ast)\}$}{\Return $\pi$}
 }
\end{procframed}
The stochastic criterion is parametrized by $\alpha \in \mathbb{R}$ and $U^\ast \in [0,1]$ which jointly determine the size of the solution space and thereby the choice difficulty. The parameter~$\alpha$ is known as the \textit{inverse temperature}, and it plays the role of the \emph{degree of rationality}; whereas $U^\ast$ is an \emph{aspiration level} which is typically chosen to be the maximum utility that can be attained in principle, \textit{i.e.} $U^\ast = 1$ in our example. 

The behavior of this algorithm is exemplified in Fig.~\ref{fig:intro-simulation}. In this simulation, the size $N$ of the policy space $\Pi$ was set to one million. The utilities were constructed by first mapping $N$ uniformly spaced points in the unit interval into $f(x) = x^2$ and then randomly assigning these values to the policies in $\Pi$. For the prior $Q$ we chose a uniform prior. Together, the utility and the prior determine the shape of the choice space illustrated in the left panel. The remaining panels show the performance of the algorithm in terms of the utility (center panel) and the number of inspected policies before a satisficing policy was found (right panel). This simulation shows two important aspects of the algorithm. First, the performance in utility is marginally decreasing in the search effort, as illustrated by the concavity of $U(\alpha)$ and the roughly proportional dependency between search effort and $\alpha$. Second, the parameter $\alpha$ essentially controls the performance of the algorithm. In contrast, it is easy to see that this performance is unaffected by the size of the set of policies, as long as the shape of the choice space is preserved.

\paragraph{Properties.}
Although the algorithm seems counter-intuitive at a first glance, it has the following advantages:
\begin{enumerate}
  \item \emph{Use of prior knowledge.} If the agent possesses a distribution $Q$ over $\Pi$, where $Q(\pi)$ denotes the probability of $\pi$ being the best policy, then it can exploit this knowledge. In particular, if it already knows the maximizer $\pi^\ast$, \textit{i.e.}\ if $Q$ is a degenerate distribution given by a Kronecker delta function $\delta[\pi = \pi^\ast]$, then it will only query said policy.
  \item \emph{Protection from adversarial enumeration.} If the agent were to inspect the policies using a fixed deterministic order, then it could fall prey to an adversarial shuffling of the utilities. The above randomization technique protects the agent from such an attack.
  \item \emph{Control of complexity.} The agent can control the size of the solution space by choosing an appropriate value for the $\alpha$ parameter. More precisely, $|\alpha| \approx 0$ corresponds to an easy problem in which most proposals $\pi \sim P$ will be accepted; and $|\alpha| \gg 0$ is a hard problem.
  \item \emph{Parallel.} The policies can be inspected in parallel, in contrast to perfect rationality\footnote{The running time of exhaustive search can be brought down to $O(\log N)$ through dynamic programming and parallelism, but the number of inspected policies is still $N$.}. Technically, this is possible because ``$\pi^\ast$ is satisficing'' is a statement in propositional logic. Instead, ``$\pi^\ast$ is a maximizer'', which can be rewritten as ``for every policy $\pi$, $\pi^\ast$ has larger utility than $\pi$'', is a statement in first-order logic with a universal quantifier that depends on exhaustive inspection. 
\end{enumerate}
The above basic choice algorithm is very general. Apart from modeling intractability, it can be used to represent choices under model uncertainty, risk and ambiguity sensitivity, and other players' attitudes towards the agent. In the main text, we will see how to derive the algorithm from first principles, how to model general information constraints in decision making, and how to extend the basic decision-making scheme to sequential scenarios.

\begin{figure}[t]
\centering
\includegraphics[width=\textwidth]{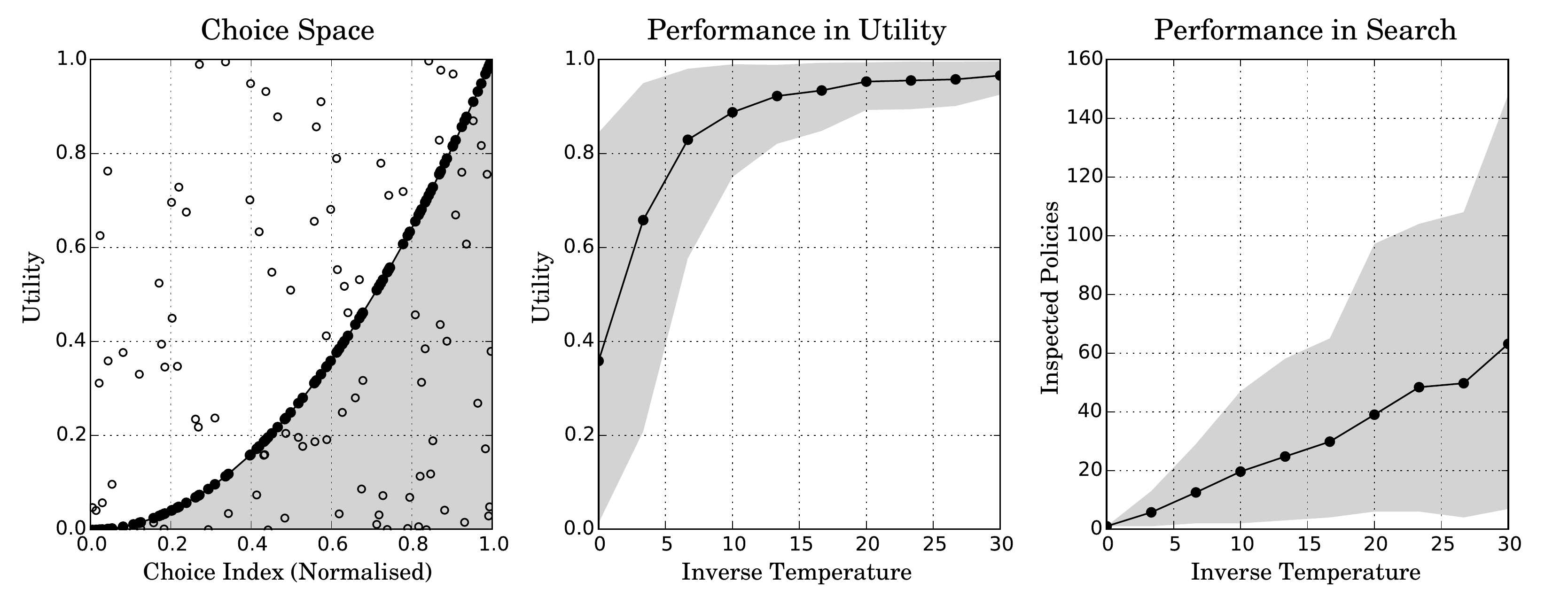}
\caption{Simulation of bounded-rational choices. The left panel shows the shape of the utility function for 100 randomly chosen policies out of one million. These are shown both in the way the agent sees them ($\circ$-markers) and sorted in ascending order of the utility ($\bullet$-markers). The shaded area under the curve is an important  factor in determining the difficulty of the policy search problem. The center and right panels illustrate the performance as a function of the inverse temperature, measured in terms of the utility (center panel) and the number of inspected policies before acceptance (right panel). Both panels show the mean (black curve) and the bands capturing 80\% of the choices excluding the first and last decile (shaded area). The values were calculated from 300 samples per setting of the inverse temperature.}\label{fig:intro-simulation}
\end{figure}

\subsection{Outlook}

In the next section we briefly review the ideas behind SEU theory. We touch upon two-player games in order to motivate a central concept: the certainty-equivalent. The section then finishes with two open problems of SEU theory: intractability and model uncertainty. Section~\ref{sec:boundedness} lays out the basic conceptual framework of information-theoretic bounded rationality. We analyze the operational characterization of having limited resources for deliberation, and state the main mathematical assumptions. From these, we then derive the free energy functional as the replacement for the expected utility in decision making. Section~\ref{sec:single-step} applies the free energy functional to single-step decision problems. It serves the purpose of further sharpening the intuition behind the free energy functional, both from a theoretical and practical view. In particular, the notion of equivalence of bounded-rational decision problems is introduced, along with practical rejection sampling algorithms. Section~\ref{sec:sequential-decisions} extends the free energy functional to sequential decisions. We will see that the notion of equivalence plays a crucial role in the construction and interpretation of bounded rational decision trees. We show how such decision trees subsume many special decision rules that represent risk-sensitivity and model uncertainty. The section finishes with a recursive rejection sampling algorithm for solving bounded-rational decision trees. The last section discusses the relation to the literature and concludes.

\section{Preliminaries in Expected Utility Theory}\label{sec:preliminaries}

Before we introduce the ideas that underlie bounded-rational decision-making, in this section we will set the stage by briefly reviewing SEU theory. 

\paragraph{Notation.}
Sets are denoted with calligraphic letters such as in $\mathcal{X}$. The set $\Delta(\mathcal{X})$ corresponds to the simplex over $\mathcal{X}$, \textit{i.e.} the set of all probability distributions over $\mathcal{X}$. The symbols $\prob$ and $\expect$ stand for the probability measure and expectation respectively, relative to some probability space with sample space $\Omega$. Conditional probabilities are defined as 
\begin{equation}\label{eq:cond-prob}
  \prob(A|B) := \frac{ \prob(A \cap B) }{ \prob(B) } 
\end{equation}
where $A,B \subset \Omega$ are measurable subsets, and where $B$ is such that $\prob(B) \neq 0$. 

\subsection{Variational principles}

The behavior of an agent is typically described in one of two ways: either \emph{dynamically}, by directly specifying the agent's actions under any contingency; or \emph{teleologically}, by specifying a \emph{variational problem} (\textit{e.g.} a convex objective function) that has the agent's policy as its optimal solution. While these two descriptive methods can be used to represent the same policy, the teleological description has a greater explanatory power because in addition it encodes a \emph{preference relation} which justifies why the policy was preferred over the alternatives\footnote{Notice also that in this sense, the qualifier ``optimal'' is a statement that only holds relative to the objective function. That is, ``being optimal'' is by no means an absolute statement, because given \emph{any policy}, one can always engineer a variational principle that is extremized by it.}.

Because of the explanatory power of variational principles, they are widely used to justify design choices. For example, virtually every sequential decision-making algorithm is conceived as a \textit{maximum expected utility} problem. This encompasses popular problem classes such as multi-armed bandits, Markov decision processes (MDPs) and partially observable Markov decision processes (POMDPs)---see \citet{RussellNorvig2010} for a discussion. In learning theory, \textit{regret minimization} \citep{Loomes1982} is another popular variational principle to choose between learning algorithms \citep{CesaBianchiLugosi2006}.

\subsection{Subjective expected utility}\label{sec:seu}

Today, the theory of \emph{subjective expected utility} \citep{Savage1954} is the \textit{de facto} theory of rationality in artificial intelligence and reinforcement learning \citep{RussellNorvig2010}. The bedrock of the theory is a representation theorem stating that the preferences of a rational agent can be described in terms of comparisons between expected utilities. The qualifiers ``subjective'' and ``expected'' in its name derive from the fact that both the utility function and the belief distribution are assumed to be properties that are specific to each agent, and that the utility of a random realization is equated with the expected utility over the individual realizations.

\paragraph{Decision problem.}
Let $\set{X}$ and $\set{Y}$ be two finite sets, the former corresponding to the \emph{set of actions} available to the agent and the latter to the \emph{set of observations} generated by the environment in response to an action. A \emph{realization} is a pair $(x,y) \in \set{X} \times \set{Y}$. Furthermore, let $U: (\set{X} \times \set{Y}) \rightarrow \reals$ be a \emph{utility function}, such that $U(x,y)$ represents the desirability of the realization $(x,y) \in (\set{X}\times\set{Y})$; and let $Q(\cdot|\cdot)$ be a conditional probability distribution where $Q(y|x)$ represents the probability of the observation $y \in \set{Y}$ given the action $x \in \set{X}$. 

\paragraph{Optimal policy.} The agent's goal is to choose a \emph{policy} that yields the highest expected utility; in other words, a probability distribution $P \in \Delta(\set{X})$ over actions that maximizes the functional (see Fig.~\ref{fig:eu-simplex})
\begin{equation}\label{eq:expected_utility}
  EU[\tilde{P}]
  := \sum_x \tilde{P}(x) \expect[U|x]
  = \sum_x \tilde{P}(x) \biggl\{ \sum_y Q(y|x) U(x,y) \biggl\}.
\end{equation}
Thus, an \emph{optimal policy} is any distribution $P$ with no support over suboptimal actions, that is, $P(x) = 0$ whenever $\expect[U|x] \leq \max_z \expect[U|z]$. In particular, because the expected utility is linear in the policy's probabilities, one can always choose a solution that is a vertex of the probability simplex $\Delta(\set{X})$:
\begin{equation}\label{eq:optimal_expected_utility}
  P(x)
  = \delta_{x^\ast}^x
  = \begin{cases}
    1 & \text{if $x = x^\ast := \arg\max_x \expect[U|x]$}\\
    0 & \text{otherwise,}
    \end{cases}
\end{equation}
where $\delta$ is the Kronecker delta. Hence, there always exists a deterministic optimal policy. 

\begin{figure}
\centering
\includegraphics[width=0.6\textwidth]{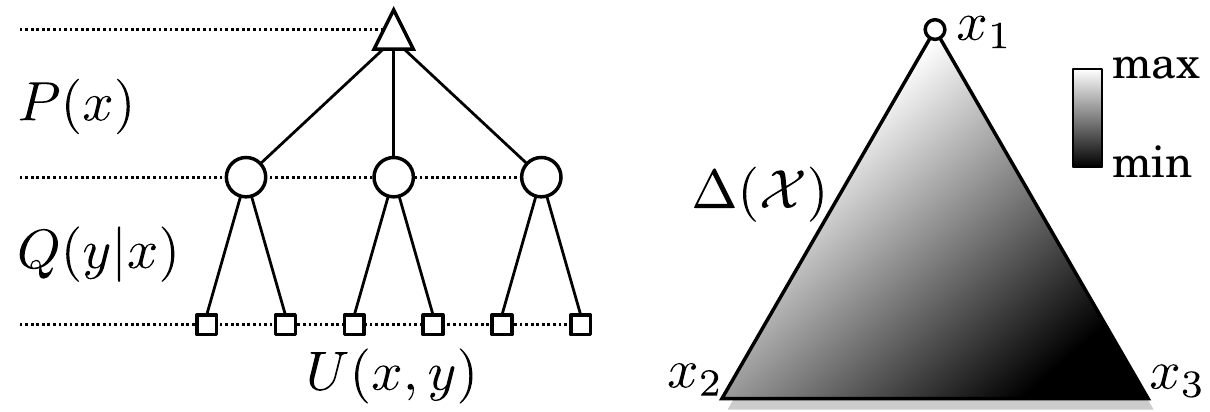}
\caption{Expected Utility Theory. Left: A decision problem can be represented as a tree with two levels. Choosing a policy amounts to assigning transition probabilities for the decision node located at the root~($\bigtriangleup$), subject to the fixed transition probabilities in the chance nodes~($\bigcirc$) and the utilities at the leaves ($\square$). Right: Mathematically, this is equivalent to choosing a member $P$ of the simplex over actions $\Delta(\mathcal{X})$ that maximizes the convex combination over conditional expected utilities $\expect[U|x]$.}\label{fig:eu-simplex}
\end{figure}

\paragraph{Utility distribution.} Given the optimal policy, the utility $U$ becomes a well-defined random variable with probability distribution
\begin{equation}\label{eq:utility_distribution}
  \prob(U=u)
  = \prob\bigl\{ (x,y) : U(x,y) = u \bigr\} \nonumber \\
  = \sum_{U^{-1}(u)} P(x) Q(y|x). 
\end{equation}
Even though the optimal policy might be deterministic, the utility of the ensuing realization is in general a non-degenerate random variable. Notably, perfectly rational agents are insensitive to the higher-order moments of the utility: so two actions yielding utilities having very different variances are regarded as being equal if their expectations are the same.

\paragraph{Friends and foes.}
What happens when the environment is yet another agent? According to \emph{game theory} \citep{Neumann1944}, we can model this situation again as a decision problem, but with the crucial difference that the agent lacks the conditional probability distribution $Q(\cdot|\cdot)$ over observations that it needs in order evaluate expected utilities. Instead, the agent possesses a second utility function $V: (\set{X} \times \set{Y}) \rightarrow \reals$ representing the desires of the environment. Game theory then invokes a \emph{solution concept}, most notably the \emph{Nash equilibrium}, in order to propose a substitute for the missing distribution $Q(\cdot|\cdot)$ based on $U$ and $V$, thereby transforming the original problem into as a well-defined decision problem.

To simplify, we assume that the agent chooses first and the environment second, and we restrict ourselves to two special cases: (a) the fully \emph{adversarial} case $U(x,y) = -V(x,y)$; and (b) the fully \emph{friendly} case $U(x,y) = V(x,y)$. The Nash equilibrium then yields the decision rules \citep{Osborne1999}
\begin{align}
  P &= \arg\max_{\tilde{P}} \sum_x \tilde{P}(x) 
    \biggl\{ \min_{\tilde{Q}} \tilde{Q}(y|x) U(x,y) \biggr\},
  \label{eq:minmax} \\
  P &= \arg\max_{\tilde{P}} \sum_x \tilde{P}(x) 
    \biggl\{ \max_{\tilde{Q}} \tilde{Q}(y|x) U(x,y) \biggr\}
  \label{eq:maxmax}
\end{align}
for the two cases respectively. Comparing these to~\eqref{eq:expected_utility}, we observe two properties. First, the overall objective function appears to arise from a modular composition of two nested choices: the outer for the agent and the inner for the environment. Second, there are essentially three ways in which the utilities of a choice are aggregated---namely through maximization, expectation, and minimization---depending on the nature of the choice. 

\paragraph{Certainty-equivalent.} 
An agent plans ahead by recursively aggregating future value into present value. As we have seen, it does so by summarizing the value of each choice using one of three aggregation types: \emph{minimization} ($\bigtriangledown$), representing an adversarial choice; \emph{expectation} ($\bigcirc$), representing an indifferent (random) choice; and \emph{maximization} ($\bigtriangleup$), representing a friendly choice. This summarized value is known as the \emph{certainty-equivalent}, because if the agent were to substitute a choice with multiple alternatives with an equivalent (degenerate) choice with a single alternative, the latter would have this value (see Fig.~\ref{fig:ce-examples}). Notice that for planning purposes, it is irrelevant whether the decision is made by the agent or not; to the agent, what matters is the degree to which an outcome (be it an action or an observation) contributes positively to its objective function (as encoded by one of the three possible aggregation operators). Indeed, another (concise) way to describe subjective expected utility theory is that the certainty-equivalent of random outcomes is given by the expectation.

\begin{figure}[ht]
\centering
\includegraphics[width=0.8\textwidth]{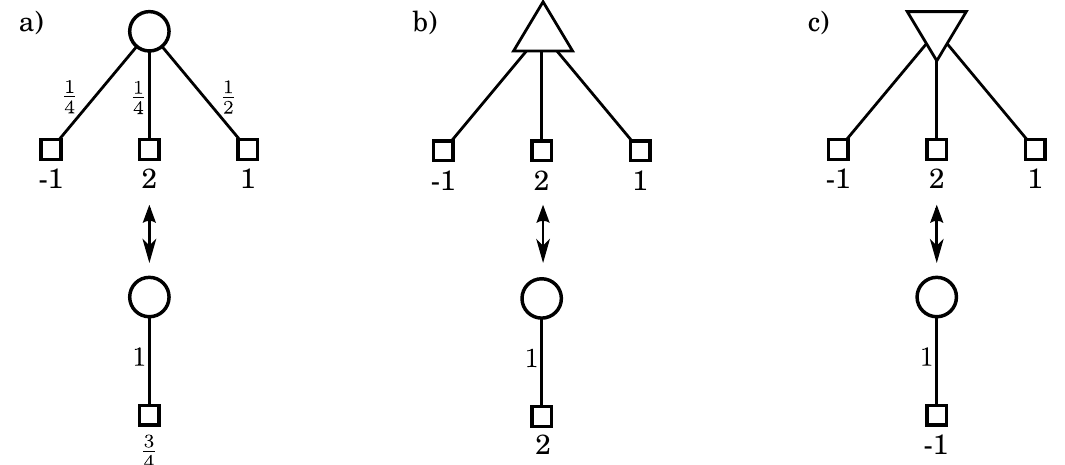}
\caption{Decision problems (top row) and their certainty-equivalents (bottom row).}\label{fig:ce-examples}
\end{figure}

\subsection{Two open problems}\label{sec:open-problems}

We finish this section by presenting two very common situations in which the application of perfect rationality does not seem to be the right thing to do: very large choice spaces and model uncertainty. 

\paragraph{Very large choice spaces.}
Consider the problem of choosing the longest straw in a given collection. Expected utility theory works well when the number of elements is small enough so that the agent can exhaustively measure all their lengths and report the longest found. This is the case in the situation depicted in Fig.~\ref{fig:large-scale}a, where it is easily seen that the top-most straw is the longest. The situation of Fig.~\ref{fig:large-scale}b however appears to be very different. Here, the number of straws is very large and their arrangement is unstructured. The lack of structure prevents us from grouping the straws for simplifying our search, \textit{i.e.} there are no symmetries for defining equivalence classes that would reduce the cardinality of our search space. Consequently, there is no obvious improvement over testing some elements by picking at random until finding one that is sufficiently good. 

\begin{figure}[h]
\centering
\includegraphics[width=\textwidth]{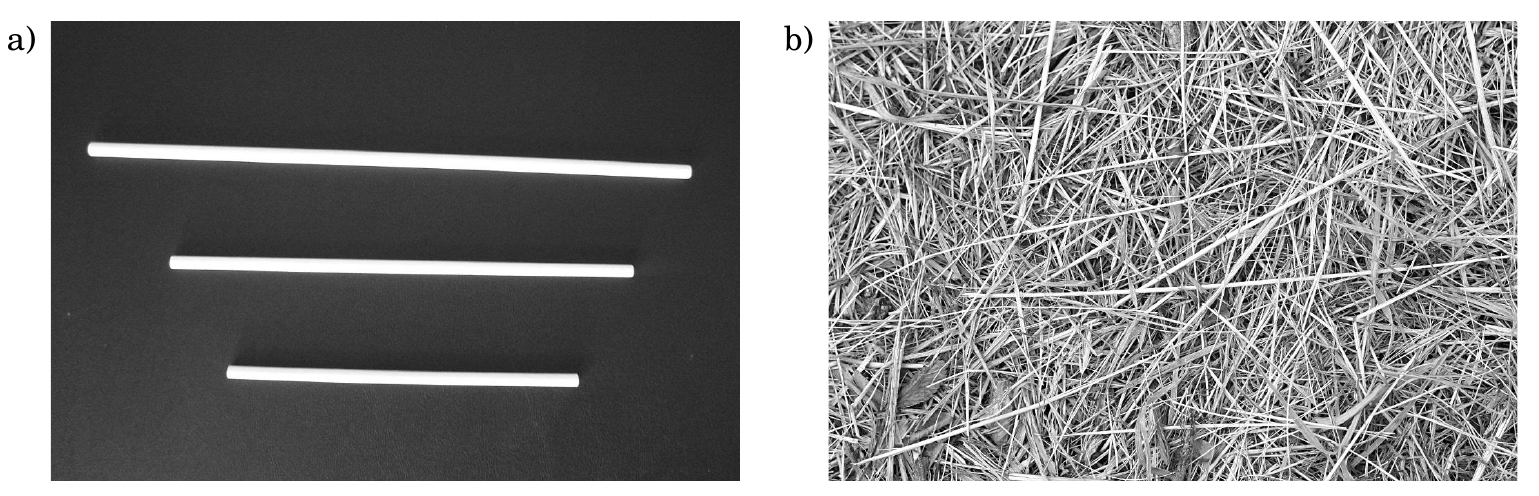}
\caption{Very large choice spaces. a) Perfect rationality can capture decision problems involving a small number of choices. b) However, perfect rationality is intractable in very large and unstructured choice spaces.}\label{fig:large-scale}
\end{figure}

\paragraph{Model uncertainty.}
Consider a task in which you must choose one of two boxes containing $100$ black and white balls. After your choice is announced, a ball is drawn at random from the box, and you win $\$ 100$ if the color is black  (or nothing otherwise). If you get to see the contents of the two boxes then you would choose the box containing a larger proportion of black balls. For instance, in Fig.~\ref{fig:ellsberg}a you would choose the left box. However, which box would you choose in the situation shown in Fig.~\ref{fig:ellsberg}b? In his seminal paper, \citet{Ellsberg1961} showed that most of people bet on the left box. A simple explanation for this phenomenon is that people fill the missing information of the right box by enumerating all the possible combinations of 100 black and white balls, attaching prior probabilities to them in a risk-averse way. Thus, such prior beliefs assigns a larger marginal probability to the loosing color---in this case, ``white''.

Now, assume that you place your bet, but no ball is drawn. Instead, you are asked to revise your bet. The contents of the boxes are assured to stay exactly the same, but now you are told that this time you win for ``white'' rather than ``black''. Would you change your bet? Empirically, most of the people stick with their initial bet. However, there is no single prior distribution over combinations that can predict this particular preference pattern, as this would contradict expected utility theory. Are people thus irrational? Perhaps, but it is worthy to point out that L. J. Savage, who formulated subjective expected utility theory, was among the people who violated the theory \citep{Ellsberg1961}. 

\begin{figure}[h]
\centering
\includegraphics[width=\textwidth]{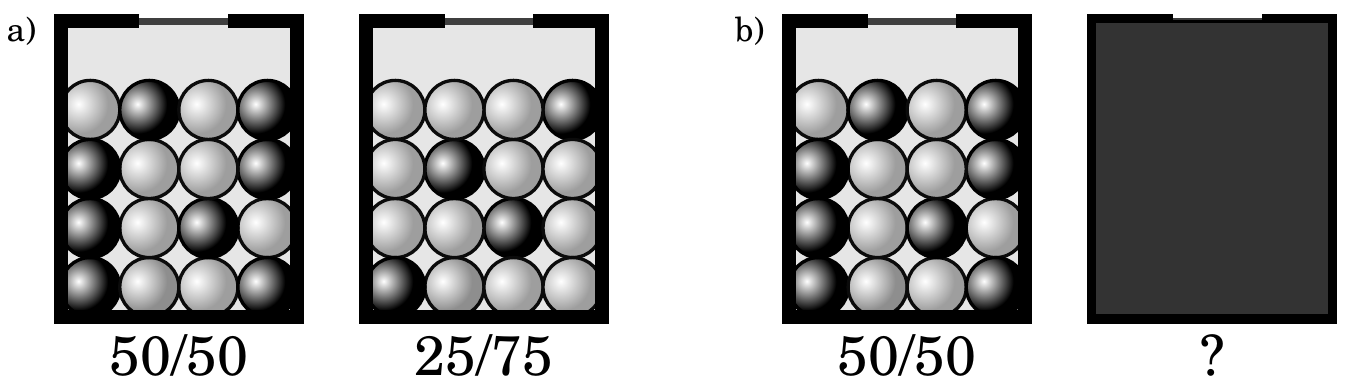}
\caption{The Ellsberg paradox. a) Two boxes with known proportions of black and white balls. b) Two boxes where only the proportions of the left box are known.}\label{fig:ellsberg}
\end{figure}

One explanation for this apparent paradox is that the right box of second experiment involves unknown probabilities\footnote{Importantly, the economic literature distinguishes between \textit{risk} (= known probabilities) and \textit{ambiguities} (= unknown probabilities) \citep{Knight1921, Ellsberg1961}. Savage defined subjective expected utility theory as a theory of decision making \emph{under risk}. There is currently no widely agreed upon definition of ambiguities, although there exist proposals, such as the framework of \textit{variational preferences} \citep{Rustichini2006}.} which affect the net value of a choice. In this case, people might actually have a uniform model over combinations of black and white balls, but they nevertheless discount the value of the box as an expression of \textit{distrust} in the model. On the other hand, there are people who perceive increased net values in some situations. Such could be the case, for instance, if they knew and trusted the person who set up the experiment. We will see later in the text how this interaction between the trust in the beliefs and the utility can be modeled using the bounded-rational framework.

\section{The Mathematical Structure of Boundedness}\label{sec:boundedness}

We now tackle the problem of formalizing boundedness. First, we will pursue an obvious route, namely that of \textit{meta-reasoning}. While meta-reasoning models can model important aspects of complex decision-making (\textit{e.g.} preliminary resource allocation and reasoning about other agents), it ultimately falls short in adequately modeling bounded-rationality.

\subsection{Meta-reasoning}\label{sec:meta-resoning}

One straightforward attempt to address the problem of intractability of expected utility theory is through \emph{meta-reasoning} \citep{Zilberstein2008}, \textit{i.e.}\ by letting the agent reason about the very costs of choosing a policy. The rationale is that, an agent can avoid prohibitive costs by finding a trade-off between the utility of a policy and the cost of evaluating said policy. 

\paragraph{Formal solution.} The argument proceeds as follows. Let $U: \Pi \rightarrow \mathbb{R}$ be a bounded utility function that maps each policy $\pi \in \Pi$ into a value $U(\pi)$ in the unit interval. A perfectly-rational agent then solves the problem
\begin{equation}\label{eq:maximisation}
  \max_{\pi \in \Pi} U(\pi),
\end{equation}
and uses any solution $\pi^\ast$ as its policy. Rather than solving~\eqref{eq:maximisation} which is deemed to be too hard, a \emph{meta-level rational} agent solves the problem
\begin{equation}\label{eq:meta-level}
  \max_{\pi \in \Pi} \Bigl\{ U(\pi) - C(\pi) \Bigr\},
\end{equation}
where $C(\pi) \in \mathbb{R}^{+}$ is a positive penalization term due to the cost of evaluating the policy~$\pi$ that is spent by the agent's policy-search algorithm, \textit{e.g.} time or space complexity of a Turing machine. 

\paragraph{Criticism.} While the idea of meta-reasoning is intuitively appealing, it fails to simplify the original decision problem. This is seen by defining $U'(\pi) := U(\pi) - C(\pi)$ and noting that the agent's meta-level reasoning maximizes the objective function
\begin{equation}\label{eq:meta-2}
  \max_{\pi \in \Pi} U'(\pi),
\end{equation}
which is itself another perfectly-rational decision problem. But then, the rationale of meta-reasoning tells us that evaluating the meta-level utilities $U'(\pi)$ should come with penalizations $C'(\pi)$ due to the costs incurred by the policy-search algorithm that is used at the meta-level\ldots

There are two problems with this scheme. First, it is quickly seen that this line of reasoning leads to an infinite regress when carried to its logical conclusion. Every time the agent instantiates a meta-level to reason about lower-level costs, it generates new costs at the meta-level. Second, the problem at a meta-level is typically harder than the lower-level one. How can we circumvent these problems?

\subsection{Incomplete Information and Interrupted Deliberation}\label{sec:incomplete-information}

The central problem with meta-reasoning lies in its self-referential logic: whenever an agent attempts to reason about its own costs of reasoning, it creates a new perfectly-rational decision problem at the meta-level that comes with additional costs that are left out. Due to this, we conclude that it is \emph{impossible} for an agent to fully apprehend its own resources of deliberation. Notice that this does not prevent agents from using meta-reasoning; it just excludes meta-reasoning as a formal solution to the bounded-rationality problem. 

\paragraph{Formal solution.} The problem can be solved by modeling the agent's deliberation as a \textit{game of incomplete information} or \textit{Bayesian game} \citep{Harsanyi1967}. Such a game allows us to represent the agent's ignorance about the very objective function it is supposed to optimize. Loosely speaking, it allows us to model the ``unknown uncertainties'' (as opposed to ``known uncertainties'') that the agent has about its goals. We refer the reader to the texts on game theory by \citet{Osborne1999} or \citet{Leyton2008} for an introduction to Bayesian games.

We model the agent's decision as single-player game of incomplete information. For this, it is useful to distinguish between the agent's \emph{alleged} objective and the \emph{perceived} objective as seen by an external observer. The agent is equipped with an objective function $U(\pi)$ that it \emph{attempts} to optimize with respect to the policy $\pi \in \Pi$. However, it unexpectedly runs out of resources, effectively being interrupted at an indeterminate point in its deliberation, and forced to commit to a suboptimal policy $\pi^{\circ} \in \Pi$. From an external observer's point of view, $\pi^{\circ} \in \Pi$ \emph{appears} to be the result of an intentional deliberation seeking to optimize an alternative objective function $U(\pi) - C(\pi)$ that trades off utilities and deliberation costs. 

Formally, let $\mathcal{C}$ be a discrete set of penalization functions of the form $C:\Pi \rightarrow \reals$ that model different interruptions. Then, the agent does not solve a particular problem, but rather a \emph{collection of problems} given by
\begin{equation}\label{eq:interrupted}
  \forall C \in \mathcal{C},\qquad \max_{\pi \in \Pi} \Bigl\{ U(\pi) - C(\pi) \Bigr\}.
\end{equation}
In other words, the agent finds an optimal policy $\pi^\ast_C$ for \emph{any} penalization function $C \in \mathcal{C}$. 

This abstract multi-valued optimization might appear unusual. However, one simple way of thinking about it is in terms of an \textit{any-time} optimization. In this scheme, the agent's computation generates ``intermediate'' solutions $\pi_1, \pi_2, \ldots$ to
\[
  U - C_1, \quad 
  U - C_2, \quad 
  U - C_3, \quad 
  \ldots
\]
respectively, where $C_1, C_2, \ldots$ is an exhaustive enumeration of $\mathcal{C}$. This ordering of $\mathcal{C}$ is not specified by~\eqref{eq:interrupted}; however, one can link this to computational resources by demanding an ascending ordering $C_1(\pi_1) \leq C_2(\pi_2) \leq \ldots$ of penalization. Nature secretly chooses a penalization function~$C^\circ \in \mathcal{C}$, which is only revealed when the agent produces the solution~$\pi^\circ$ to $U-C^\circ$. This causes the computation to stop and the agent to return $\pi^\circ$. Now, notice that since the agent does not know $C^\circ$, its computational process can be regarded as an \textit{a priori} specification of the solution to the multi-valued optimization problem \eqref{eq:interrupted}.

\subsection{Commensurability of utility and information}\label{sec:commensurable}

The fact that finding a good policy comes at a cost that changes the net value of the agent's pay-off suggests that utilities and search costs should be translatable into each other, \textit{i.e.} they should be \emph{commensurable}. The aim of this section is to present a precise relationship between the two quantities. This relationship, while straightforward, has rich and conceptually non-trivial consequences that significantly challenge our familiar understanding of decision-making.

\paragraph{Search.} We first introduce a search model that will serve as a concrete example for our exposition. In this model, the agent searches by repeatedly obtaining a random point from a \emph{very large and unstructured domain} until a desired target is hit. For this, we consider a finite \emph{sample space}~$\Omega$ and a probability measure $\prob$ that assigns a probability $\prob(S)$ to every subset $S \subset \Omega$; a \emph{reference set} $B \subset \Omega$ with $\prob(B) \neq 0$; and a \emph{target set} $A \subset B \subset \Omega$. In each round, the agent samples a point $\omega$ from the conditional probability distribution $\prob(\cdot|B)$. If $\omega$ falls within the target set $A$, then the agent stops. Otherwise, the next round begins and the sampling procedure is repeated. 

The components of this model are interpreted as follows. The agent's \emph{prior knowledge} is modeled using the reference set~$B$. The points that lie outside of~$B$ are those that are never inspected by the agent (\textit{e.g.} because it knows that these points are undesirable or because they are simply inaccessible). Every choice of a reference set induces a \emph{prior} distribution~$\prob(\cdot|B)$. Furthermore, the difficulty of finding the target is relative: The number of samples~$N$ that the agent has to inspect up until hitting the target depends upon the relative size of~$A$ with respect to~$B$, which is given by the conditional probability $\prob(A|B)$. Thereby, $N$ is a random variable that follows a geometric distribution $N \sim \mathcal{G}(p)$ with success probability $p := \prob(A|B)$, having the expected value
\begin{equation}\label{eq:expected-target}
    \expect[ N ] = \frac{ 1 }{ \prob(A|B) }.
\end{equation}

\comment{
\paragraph{``Straw pile'' assumptions.} To make our model more realistic for large-scale search domains, we make the following additional informal assumptions:
\begin{enumerate}
  \item \emph{Very large:} Although finite, we assume that the sample space $\Omega$ is very large. 
  \item \emph{Negligible probability:} The agent's knowledge is at all times limited, and every singleton has negligible probability mass relative to the reference, i.e.\ $\prob(\{\omega\}|B) \approx 0$.
  \item \emph{Irreducible:} The elements of the sample space cannot be grouped in any meaningful way so as to reduce the effective size of the search domain.
  \item \emph{Lack of structure:} We assume that previous samples do not provide any significant information about the location of the target.
\end{enumerate}
One way to imagine these assumptions is in terms of the simple ``straw pile'' analogy introduced in Sec.~\ref{sec:open-problems} illustrated in Fig.~\ref{fig:straw}b, where the agent has to find a long straw inside of a very large pile of straw. To do so, the agent starts by using his prior knowledge: it first decides to focus its attention on a smaller region that it knows to contain a much larger proportion of long straw than the original pile. It then repeatedly selects a straw and checks its length until it finds one that matches the requirement. Importantly, that region contains too many straws that are disordered and essentially indistinguishable from each other at a first glance. Because of this, the agent \emph{cannot do not better} than just inspecting one by one, chosen at random.

\begin{figure}[htbp]
\centering
\includegraphics[width=0.6\textwidth]{figures/straw.pdf}
\caption{``Straw Pile'' Assumptions. }\label{fig:straw}
\end{figure}

}%

\paragraph{Decision complexity.} Based on the previous search model, we now define the \textit{decision complexity} $\cost(A|B)$ as a measure of the cost required to specify a target set $A$ given a reference $B$, where $A$ and $B$ are arbitrary measurable subsets of the sample space $\Omega$ with $\prob(B) \neq 0$. We impose the following properties on the measure $\cost$:

\begin{mdframed}
\begin{enumerate}
\item[a)] \emph{Functional form:} For every $A, B$ such that $A \subset B \subset \Omega$,
  \[\cost(A|B) := f \circ \prob(A|B),\] 
  where $f$ is a continuous, real-valued mapping.
\item[b)] \emph{Additivity:} For every $A, B, C$ such that $A \subset B \subset C \subset \Omega$, 
  \[ \cost(A|C) = \cost(B|C) + \cost(A|B). \]
\item[c)] \emph{Monotonicity:} For every $A, B, C, D$ such that $A \subset B \subset \Omega$ and $C \subset D  \subset \Omega$,
  \[ \prob(A|B) > \prob(C|D) \quad \Longleftrightarrow \quad \cost(A|B) < \cost(C|D). \]
\end{enumerate}
\end{mdframed}

\begin{figure}[htbp]
\centering
\includegraphics[width=\textwidth]{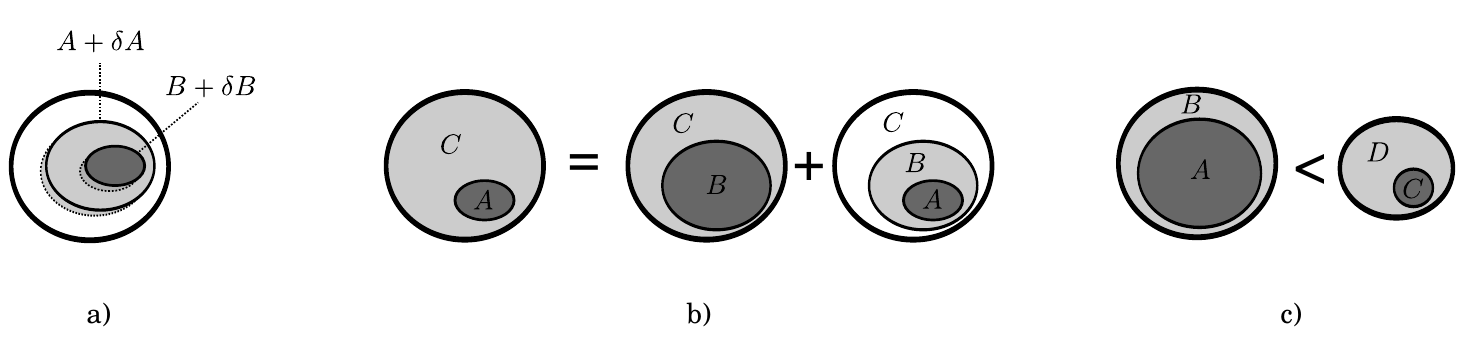}
\caption{Decision complexity. In each figure, the reference and target sets are depicted by light and dark-colored areas respectively. a) The complexity varies continuously with the target $A$ and the reference $B$. b) The complexity can be decomposed additively. c) The complexity decreases monotonically when increasing the conditional probability.}\label{fig:desiderata}
\end{figure}

How does $\cost$ look like? We first observe that the desideratum (a) allows us to restrict our attention only to continuous functions~$f$ that map the unit interval $[0,1]$ into real numbers. Then, the desiderata~(b) and~(c) imply that any~$f$ must fulfill the functional equation
\begin{equation}\label{eq:functional-equation-f}
 f(pq) = f(p) + f(q)
\end{equation}
for any $p,q \in [0,1]$ subject to the constraint $f(p) < f(q)$ whenever $p > q$. It is well-known that any solution to the previous functional equation must be of the form
\begin{equation}\label{eq:solution-f}
  f(p) = - \frac{1}{\alpha} \log p,
\end{equation}
where $\alpha > 0$ is an arbitrary positive number. Thus, the complexity measure is given by
\begin{equation}\label{eq:cost}
  \cost(A|B) = -\frac{1}{\alpha} \log P(A|B) = \frac{1}{\alpha} \log \expect[N],
\end{equation}
that is, proportional to the \emph{Shannon information} $-\log \prob(A|B)$ of $A$ given $B$. This shows that the decision complexity is proportional to the minimal amount of bits necessary to specify a choice and proportional to the logarithm of the expected number of points $\expect[N]$ that the agent has to sample before finding the target.

\paragraph{Utility.} 
Let us take one step back to revisit the concept of utility. Utilities were originally envisioned as \emph{auxiliary} quantities for conceptualizing \emph{preferences} in terms of simple numerical comparisons. In turn, a \emph{preference} is an empirical tendency for choosing one course of action over another that the agent repeats under similar circumstances. In the economic literature, this rationale that links probability of choice and preferences is known as \emph{revealed preferences} \citep{Samuelson1938}.

It is interesting to note that subjective expected utility is a theory that is particularly stringent in its behavioral demands, because it assumes that an agent will unequivocally choose the exact same course of action whenever it faces an equivalent situation---rather than just displaying a \emph{tendency} towards repeating a choice. Intuitively though, it seems natural to admit weaker forms of preferences. For instance, consider the situation in Fig.~\ref{fig:embedding}a depicting an agent's choice probabilities $Q(x)$ and $P(x)$ before and after deliberation respectively. Even though the agent does not commit to any particular choice, it is plausible that its preference relation $\succ$ is such that $x_2 \succ x_3 \succ x_1$, meaning that $U(x_2) > U(x_3) > U(x_1)$ for an appropriately defined notion of utility.

\begin{figure}
\centering
\includegraphics[width=\textwidth]{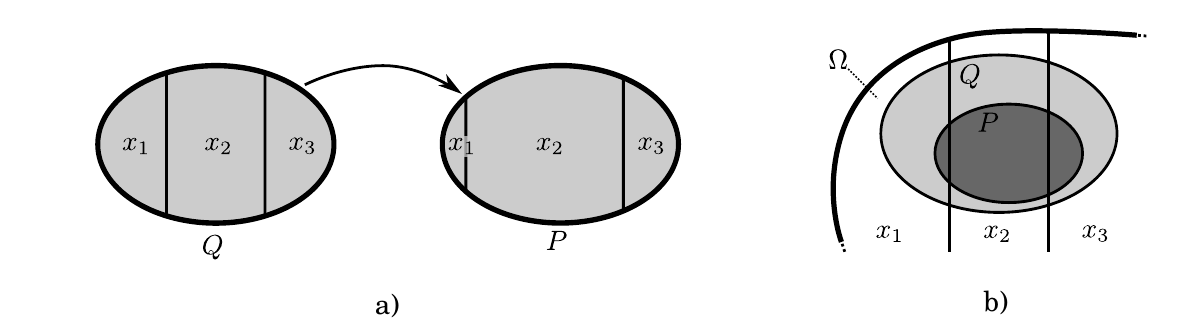} 
\caption{Deliberation as search. a) An agent's deliberation is a transformation of prior choice probabilities $Q(x)$ into posterior choice probabilities $P(x)$. The relative changes reflect the agent's preferences. b) To understand the complexity of this transformation, we model it as a search process with the convention $Q(x) = \prob(x|Q)$ and $P(x) = \prob(x|P)$, where $P \subset Q \subset \Omega$.}\label{fig:embedding}
\end{figure}

\paragraph{Free energy.}
We have discussed several concepts such as decisions, choice probabilities, utilities, and so forth. Our next step consists in reducing these to a single primitive concept: decision complexity. This synthesis is desirable both due the theoretical parsimony as well as the integrated picture of decision-making that it delivers. To proceed, we first note that an agent's deliberation process transforming a prior~$Q$ into a posterior~$P$ can be cast in terms of a search process where~$Q$ and~$P$ are the reference and target sets respectively. As illustrated in Fig.~\ref{fig:embedding}b, the set of available choices $\mathcal{X}$ forms a partition of the sample space $\Omega$. The distributions~$P$ and~$Q$ are encoded as nested subsets $P \subset Q \subset \Omega$ using the notational convention
\begin{equation}\label{eq:convention-prob}
  P(x) := \prob(x|P) \qquad \text{ and } \qquad Q(x) := \prob(x|Q),
\end{equation}
that is, where the two sets induce distributions over the choices through conditioning. With these definitions in place, we can now derive an expression for the complexity $\cost(P|Q)$ in terms of choice-specific complexities $\cost(x \cap P|x \cap Q)$:
\begin{align}
  \nonumber
  \cost(P|Q) &= -\frac{1}{\alpha} \log \prob(P|Q) \vphantom{\biggl|}\\
  \nonumber
       &= - \frac{1}{\alpha} \sum_x \prob(x|P) 
  \nonumber
         \log \biggl\{ \prob(P|Q) \frac{ \prob(x|P) \prob(x|Q) }{ \prob(x|P) \prob(x|Q) } \biggr\}\\
  \nonumber
       &= - \frac{1}{\alpha} \sum_x \prob(x|P) \log \frac{ \prob(x \cap P|Q) }{ \prob(x|Q) }
         + \frac{1}{\alpha} \sum_x \prob(x|P) \log \frac{ \prob(x|P) }{ \prob(x|Q) } \\
  \label{eq:CPQ}
       &= \sum_x \prob(x|P) \cost(x \cap P|x \cap Q)
                + \frac{1}{\alpha} \sum_x \prob(x|P) \log \frac{ \prob(x|P) }{ \prob(x|Q) }.
\end{align}
The first equality is obtained through an application of the definition~\eqref{eq:cost}. The second equality is obtained by multiplying the constant term $\prob(P|Q)$ with another term that equals one and subsequently taking the expectation with respect to $P(x|P)$. Separating the terms in the logarithm into two different sums and using the product rule $\prob(x \cap P|Q) = \prob(x|P) \cdot \prob(P|Q)$ gives the third equality (note that $P \cap Q = P$). The last step of the derivation is another application of \eqref{eq:cost}. If we now use the convention \eqref{eq:convention-prob} we get
\begin{equation}\label{eq:free-energy-cost}
  \cost(P|Q) = 
    \sum_x P(x) \cost(x \cap P|x \cap Q) 
    + \frac{1}{\alpha} \sum_x P(x) \log \frac{ P(x) }{ Q(x) }.
\end{equation}
How can we interpret this last expression? In essence, it relates the decision complexities of two different states of knowledge. If the agent's state of knowledge is $(x \cap Q)$, then the complexity of finding the target $(x \cap P)$ is equal to $\cost(x \cap P|x \cap Q)$. This assumes that the agent knows that the resulting choice will be $x$ for sure. However, since the agent does not know the final choice, then \eqref{eq:free-energy-cost} says that the total complexity of deliberation is equal to the average choice-specific complexities \emph{plus} a penalty (given by the KL-divergence) due to not knowing of the future outcome. Put differently, the certainty-equivalent complexity of an uncertain choice \emph{is larger than the expected complexity}. This constitutes the most important deviation from subjective expected utility theory.

Equation \eqref{eq:free-energy-cost} derived above can now be turned into a variational principle by observing that the r.h.s.\ is convex in the posterior choice probabilities. Thus, define \emph{utilities} as quantities that, up to a constant $C \in \mathbb{R}$, are equal to negative complexities:
\[
  U(x) := -\cost(x \cap P|x \cap Q) + C.
\]
Then, subtracting $C$ from \eqref{eq:free-energy-cost} and taking the negative gives the functional 
\begin{equation}\label{eq:free-energy-functional}
  F[\tilde{P}] := \sum_x \tilde{P}(x) U(x) 
    - \frac{1}{\alpha} \sum_x \tilde{P}(x) \log \frac{ \tilde{P}(x) }{ Q(x) },
\end{equation}
which now is \emph{concave} in $\tilde{P}$ and, when maximized, minimizes the complexity of transforming~$Q$ into~$P$ subject to the utilities~$U(x)$. Equation \eqref{eq:free-energy-functional} is the so-called \emph{free energy functional}, or simply the \emph{free energy}\footnote{Here we adopt this terminology to relate to existing work on statistical mechanical approaches to control. To be precise however, in the statistical mechanical literature this functional corresponds to (a shifted version of) the \emph{negative free energy difference}. This is because utilities are negative energies, and because \eqref{eq:free-energy-functional} characterizes the \emph{difference} between two free energy potentials.}, and it will serve as the foundation for the approach to bounded-rational decision making of this work.

\section{Single-Step Decisions}\label{sec:single-step}

\subsection{Bounded-rational decisions}

In this section we will take the free energy functional~\eqref{eq:free-energy-functional} as the objective function for modeling bounded-rational decision making. We will first focus on simple one-step decisions and explore their conceptual and algorithmic implications.

\paragraph{Decision problem.}
A bounded-rational agent, when deliberating, transforms prior choice probabilities into posterior choice probabilities in order to maximize the expected utility---but it does so subject to information constraints. Formally, a \emph{bounded-rational decision problem} is a tuple $(\alpha, \mathcal{X}, Q, U)$, where: $\alpha \in \mathbb{R}$ is the \emph{inverse temperature} which acts as a rationality parameter; $\mathcal{X}$ is a finite set of possible \emph{outcomes};  $Q \in \Delta(\mathcal{X})$ is a prior probability distribution over $\mathcal{X}$ representing a \emph{prior policy}; and $U: \mathcal{X} \rightarrow \mathbb{R}$ is a real-valued mapping of the outcomes called the \emph{utility function}.

\paragraph{Goal.}
Given a bounded-rational decision problem $(\alpha, \mathcal{X}, Q, U)$, the \emph{goal} consists in finding the \emph{posterior policy} $P \in \Delta(\mathcal{X})$ that \emph{extremizes} the \emph{free energy functional} 
\begin{equation}\label{eq:fe}
  F[\tilde{P}] := 
    \underbrace{ \sum_x \tilde{P}(x) U(x) }_\text{Expected Utility} 
    - \frac{1}{\alpha} \underbrace{ \sum_x \tilde{P}(x) \log \frac{\tilde{P}(x)}{Q(x)} }_\text{Information Cost}.
\end{equation}
Thus, the free energy functional captures a fundamental decision-theoretic trade-off: it corresponds to the expected utility, regularized by the information cost of representing the final distribution~$P$ using the base distribution~$Q$. The functional is illustrated in Fig.~\ref{fig:free-energy-simplex}.

\paragraph{Inverse temperature.}
The inverse temperature $\alpha$ controls the trade-off between utilities and information costs by setting the exchange rate between units of information (in \emph{bits}) and units of utility (in \emph{utiles}). An agent's deliberation process can be affected by a number of disparate factors imposing information constraints; nonetheless, here the central assumption is that all of them can ultimately be condensed into the single parameter $\alpha$. 

Furthermore, notice that we have extended the domain of $\alpha$ to the real values $\mathbb{R}$. The sign of $\alpha$ determines the type of optimization: when $\alpha>0$ is positive, then the free energy functional $F_\alpha$ is concave in $\tilde{P}$ and the posterior policy $P$ is the \emph{maximizer}; and when $\alpha<0$ is negative, then $F$ is convex in $\tilde{P}$ and $P$ is the \emph{minimizer}. We will further elaborate on the meaning of the negative values later when analyzing sequential decisions.

\begin{figure}
\centering
\includegraphics[width=0.8\textwidth]{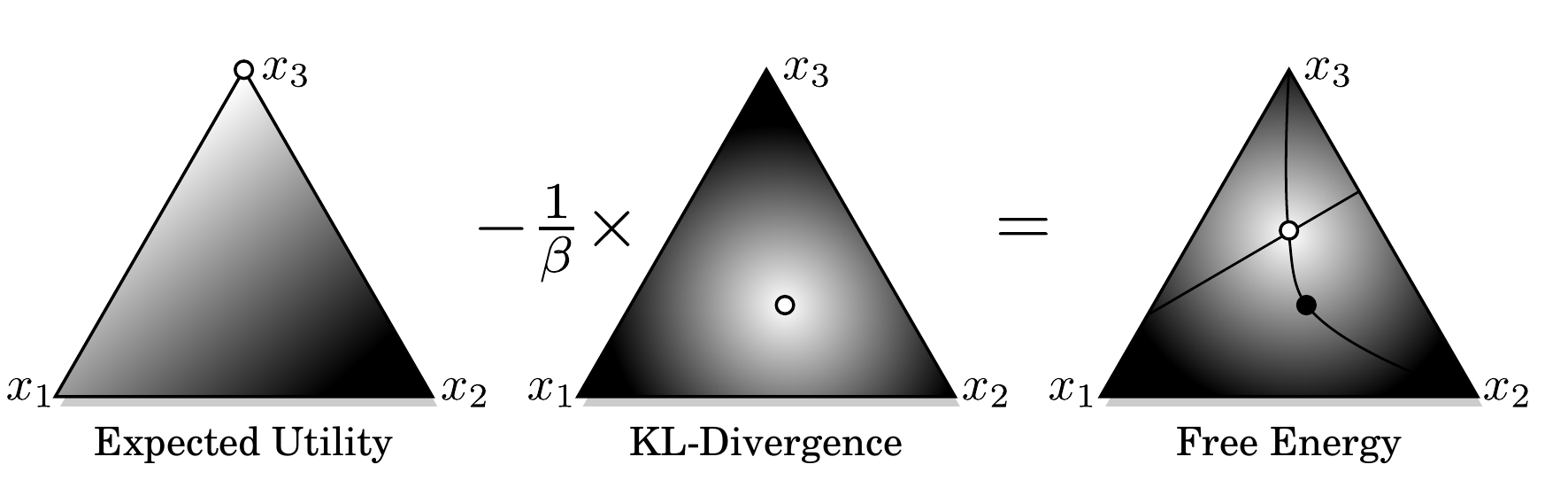}
\caption{The Free Energy Functional. A bounded-rational decision-problem combines a linear and a non-linear cost function, the first being the expected utility and the latter the KL-divergence of the posterior from the prior choice probabilities. The optimal distribution is the point on the linear subspace (defined by the expected utility and the inverse temperature) that minimizes the KL-divergence to the prior. In \emph{information geometry}, this point is known as the \emph{information projection} of the prior onto the linear subspace \citep{Csiszar2004}.}\label{fig:free-energy-simplex}
\end{figure}

\paragraph{Optimal choice.}

The optimal solution to \eqref{eq:fe} is given by the Gibbs distribution
\begin{equation}\label{eq:optimal}
  P(x) = \frac{1}{Z_\alpha} Q(x) \exp\{ \alpha U(x) \},
  \qquad Z_\alpha = \sum_x Q(x) \exp\{ \alpha U(x) \},
\end{equation}
where the normalizing constant $Z_\alpha$ is the \emph{partition function}. Inspecting \eqref{eq:optimal}, we see that the optimal choice probabilities $P(x)$ is a standard Bayesian posterior\footnote{We will further elaborate on this connection later in the text.} obtained by multiplying the prior $Q(x)$ with a \emph{likelihood} term that grows monotonically with the utility $U(x)$. The inverse temperature controls the balance between the prior $Q(x)$ and the modifier $\exp\{ \alpha U(x) \}$.

\begin{figure}[htb]
\centering
\includegraphics[width=0.8\textwidth]{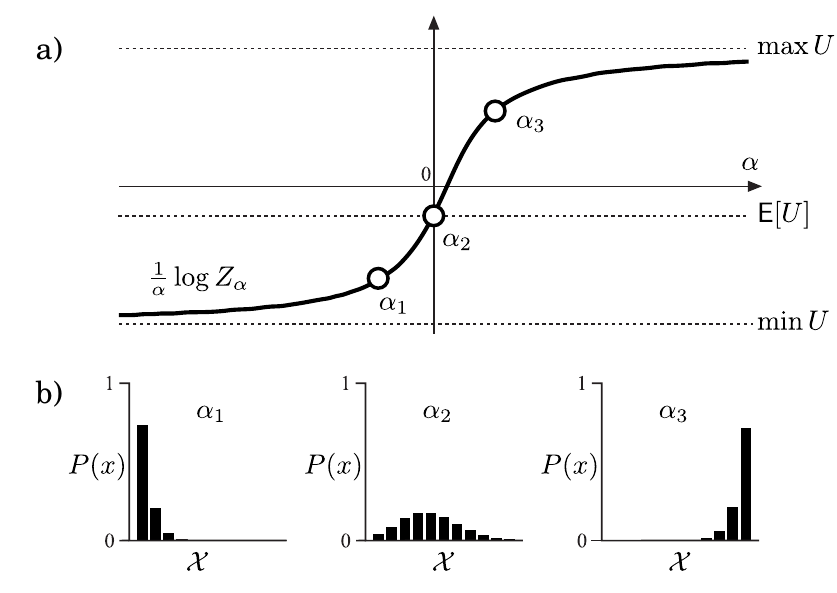}
\caption{Certainty-equivalent and optimal choice. a) The certainty-equivalent $\frac{1}{\alpha} \log Z_\alpha$, seen as a function of the inverse temperature $\alpha \in \mathbb{R}$, has a sigmoidal shape that moves between $\min U$ and $\max U$, passing through $\expect[U]$ at $\alpha = 0$. Panel (b) shows three optimal choice distributions for selected values of $\alpha$. Notice that $\alpha_2 = 0$, $P(x) = Q(x)$.}\label{fig:ce-versus-beta}
\end{figure}

\paragraph{Certainty-equivalent.}

To understand the \emph{value} that the agent assigns to a given decision problem, we need to calculate its \emph{certainty-equivalent}. To do so, we insert the optimal choice probabilities~\eqref{eq:optimal} into the free energy functional~\eqref{eq:fe}, obtaining the expression 
\begin{equation}\label{eq:certainty-equivalent}
    F := F[P]
    = \frac{1}{\alpha} \log Z_\alpha
    = \frac{1}{\alpha} \log \biggl( \sum_x Q(x) e^{\alpha U(x)} \biggr).
\end{equation}
In the text, we will always use the notation $F[\cdot]$ (with functional brackets) for the free energy and $F$ (without brackets) for the certainty-equivalent. An interesting property of the certainty-equivalent is revealed when we analyze its change with the inverse temperature $\alpha$ (Fig.~\ref{fig:ce-versus-beta}). Obviously, the more the agent is in control, the more effectively it can control the outcome, and thus the higher it values the decision problem. In particular, the value and the choice probabilities take the following limits,
\begin{align*}
  \vphantom{\sum_x}
  \alpha &\rightarrow +\infty &
    \tfrac{1}{\alpha} \log Z_\alpha
        &= \max_x U(x)
        & P(x) &= \mathcal{U}_{\max}(x)\\
  \vphantom{\sum_x}
  \alpha &\rightarrow 0 &
    \tfrac{1}{\alpha} \log Z_\alpha
        &= \sum_x Q(x) U(x)
        & P(x) &= Q(x)\\
  \vphantom{\sum_x}
  \alpha &\rightarrow -\infty &
    \tfrac{1}{\alpha} \log Z_\alpha
        &= \min_x U(x)
        & P(x) &= \mathcal{U}_{\min}(x),
\end{align*}
where $\mathcal{U}_{\max}$ and $\mathcal{U}_{\min}$ are the uniform distribution over the maximizing and minimizing subsets
\begin{align*}
  \set{X}_{\max}
  &:= \{x \in \set{X}: U(x) = \max_{x'} U(x') \}\\
  \set{X}_{\min}
  &:= \{x \in \set{X}: U(x) = \min_{x'} U(x') \}
\end{align*}
respectively. Here we see that the inverse temperature $\alpha$ plays the role of a boundedness parameter and that the single expression $\frac{1}{\alpha} \log Z$ is a generalization of the classical concept of \emph{value} in reinforcement learning (see Fig.~\ref{fig:decision-problem}).

\begin{figure}
\centering
\includegraphics[width=\textwidth]{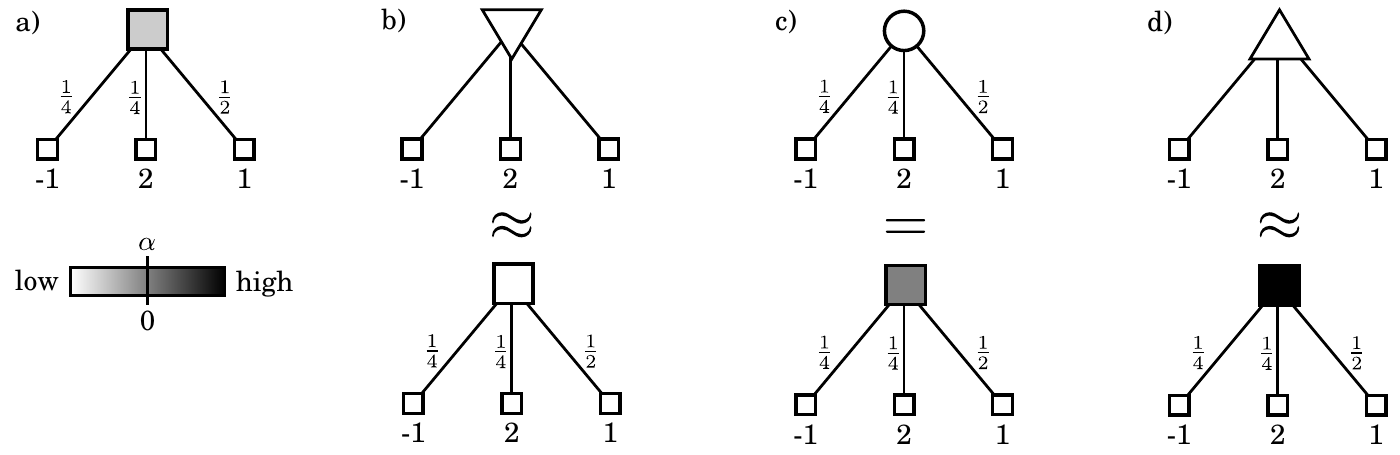}
\caption{Approximation of the classical decision rules using bounded-rational decision problems. a) We represent a bounded-rational decision problem using a colored square node, where the color encodes the inverse temperature. b--d) Classical decision rules and their bounded-rational approximations.}\label{fig:decision-problem}
\end{figure}

\subsection{Stochastic choice}\label{sec:stochastic-choice}

A perfectly-rational agent must always choose the alternative with the highest expected utility. In general, this operation cannot be done without exhaustive enumeration, which is more often than not intractable. In contrast, we expect a bounded-rational agent to inspect only a \emph{subset of alternatives} until it finds one that is good enough---that is, a \emph{satisficing} choice. Furthermore, the effort put into analyzing the alternatives should scale with the agent's level of rationality. A central feature of information-theoretic bounded rationality is that this ``algorithmic'' property of analyzing just a subset is built right into the theory. 

More precisely, this simplification is achieved by noticing that acting optimally amounts to obtaining \emph{just one random sample} from the posterior choice distribution. Any other choice scheme that does not conform to the posterior choice probabilities, such as picking the mode of the distribution for instance, violates the agent's information constraints modeled by the objective function. We review a basic sampling scheme that is readily suggested by the specific shape of the posterior distribution.

\paragraph{Rejection sampling.}

The simplest sampling scheme is immediately suggested by the form of the posterior choice distribution \eqref{eq:optimal}, illustrated in Fig.~\ref{fig:rejection-sampling}. If we interpret $Q$ as prior knowledge that is readily available to the agent in the form of random samples, then it can filter them to generate samples from $P$ using rejection sampling. This works as follows: the agent first draws a sample $x$ from $Q$ and then accepts it with probability
\begin{equation}\label{eq:rejection}
  A(x|U^\ast) = \min\bigl\{ 1, e^{ \alpha [U(x) - U^\ast] } \bigr\},
\end{equation}
where $U^\ast \in \reals$ is a target value set by the agent. This is repeated until a sample is accepted. Notice that this sampling scheme does not require the agent to draw the samples sequentially: indeed, rejection sampling can be \emph{parallelized} by drawing many samples and returning \emph{any} of the accepted choices. The next theorem guarantees that rejection sampling does indeed generate a sample from $P$.

\begin{figure}[ht]
\begin{center}
\includegraphics[width=\textwidth]{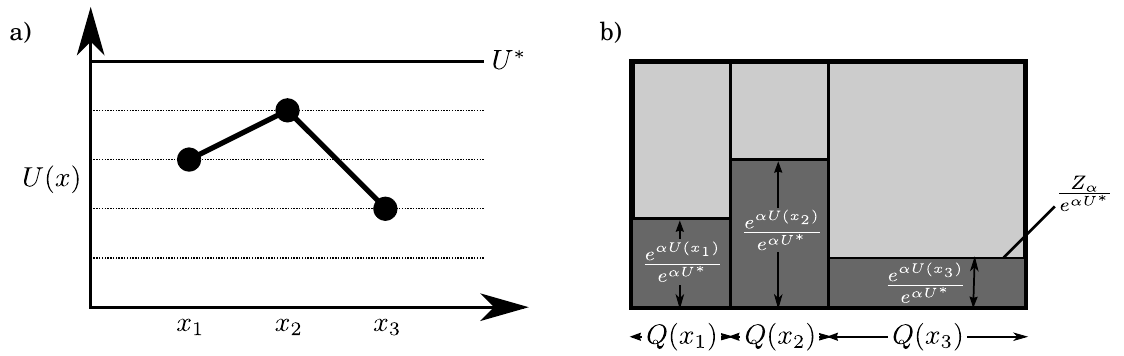}
\end{center}
\caption{Choosing optimally amounts to sampling from the posterior choice distribution $P(x) \propto Q(x) \exp\{\alpha U(x)\}$ using rejection sampling. a) A utility function $U(x)$ and a target utility $U^\ast$. b)~The choice is phrased as a search process where the agent attempts to land a sample into the target set (dark area). Generating \emph{any choice} corresponds to obtaining a successful Bernoulli random variate $Z \sim \mathcal{B}(p_\alpha)$, where the probability of success is equal to $p_\alpha = Z_\alpha / \exp\{\alpha U^\ast\}$.}
\label{fig:rejection-sampling}
\end{figure}

\begin{theorem}\label{theo:rejection}
Rejection sampling with acceptance probability~\eqref{eq:rejection} produces the correct distribution as long as $U^\ast \geq \max_x\{ U(x) \}$ when $\alpha \geq 0$ and $U^\ast \leq \min_x\{ U(x) \}$ when $\alpha \leq 0$.
\end{theorem}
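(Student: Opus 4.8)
The plan is to invoke the standard correctness guarantee for rejection sampling and then check that the stated conditions on $U^\ast$ make the scheme well-posed. Recall the basic fact: if one repeatedly proposes $x \sim Q$ and accepts with probability $A(x|U^\ast)$, then, conditioned on acceptance, the returned sample has distribution proportional to $Q(x)\,A(x|U^\ast)$, provided $A(x|U^\ast)$ is a genuine probability (i.e.\ $0 \le A(x|U^\ast) \le 1$) and the overall acceptance probability is strictly positive. So the whole proof reduces to (i) verifying that $A$ is a valid acceptance probability under the hypotheses, and (ii) computing the resulting normalized distribution.

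First I would dispose of the branch in the minimum $\min\{1, e^{\alpha[U(x)-U^\ast]}\}$ by a short case analysis on the sign of $\alpha$. When $\alpha \ge 0$ and $U^\ast \ge \max_x U(x)$, we have $U(x) - U^\ast \le 0$ for every $x$, hence $\alpha[U(x)-U^\ast] \le 0$ and $e^{\alpha[U(x)-U^\ast]} \le 1$. When $\alpha \le 0$ and $U^\ast \le \min_x U(x)$, we have $U(x) - U^\ast \ge 0$, so again the product $\alpha[U(x)-U^\ast] \le 0$ and the exponential is at most $1$. In both regimes the minimum is therefore attained by the exponential term, so $A(x|U^\ast) = e^{\alpha[U(x)-U^\ast]} = e^{-\alpha U^\ast}\, e^{\alpha U(x)}$, which is manifestly in $(0,1]$. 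This is exactly where the constraint on $U^\ast$ enters, and it is the only delicate step.

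Next I would compute the normalization. The probability that a single proposal is accepted is
\[
  p = \sum_x Q(x)\,A(x|U^\ast) = e^{-\alpha U^\ast}\sum_x Q(x)\,e^{\alpha U(x)} = e^{-\alpha U^\ast} Z_\alpha,
\]
which recovers the acceptance probability $p_\alpha = Z_\alpha / e^{\alpha U^\ast}$ anticipated in Fig.~\ref{fig:rejection-sampling}. Since $Z_\alpha > 0$ and $e^{-\alpha U^\ast} > 0$, we have $p > 0$, so the number of proposals until the first acceptance is geometric with finite mean and the procedure halts almost surely. Dividing the joint ``propose-and-accept'' weight by $p$ gives, for each $x$,
\[
  \frac{Q(x)\,A(x|U^\ast)}{p} = \frac{Q(x)\,e^{-\alpha U^\ast}\, e^{\alpha U(x)}}{e^{-\alpha U^\ast} Z_\alpha} = \frac{1}{Z_\alpha} Q(x)\,e^{\alpha U(x)} = P(x),
\]
which is precisely the optimal choice distribution \eqref{eq:optimal}, completing the argument.

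The main obstacle is bookkeeping rather than depth: the only genuine content is the sign analysis that forces the minimum onto the exponential branch, which simultaneously legitimizes $A$ as a probability and produces the clean factor $e^{-\alpha U^\ast}$ that cancels in the normalization. I would stress that the hypotheses are \emph{exactly} what correctness requires, since if the constraint on $U^\ast$ were violated the truncation at $1$ would clip the large-weight outcomes and distort the accepted distribution away from $P$.
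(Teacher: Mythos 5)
Your proof is correct and follows essentially the same route as the paper's: both identify the envelope constant $c = e^{\alpha U^\ast}/Z_\alpha$ (equivalently, verify that the exponential branch of the $\min$ is active under the stated sign conditions) and reduce to the standard rejection-sampling guarantee. Your version is slightly more explicit in that it carries out the normalization to confirm the accepted sample has distribution $P$, whereas the paper stops after exhibiting $c$ and the acceptance ratio; this is a presentational difference, not a different argument.
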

\begin{proof}
First, we need a constant $c$ such that for all $x$, $P(x) \leq c \cdot Q(x)$. The smallest constant is given by
\[
  \frac{ P(x) }{ Q(x) }
  = \frac{ e^{ \alpha U(x) } }{ \sum_{x'} Q(x') e^{ \alpha U(x') } }
  \leq \frac{ e^{ \alpha U^\ast } }{ \sum_{x'} Q(x') e^{ \alpha U(x') } }
  = c.
\]
These inequalities hold whenever $U^\ast$ is chosen as $U^\ast = \max_x U(x)$ if $\alpha \geq 0$ and $U^\ast = \min_x U(x)$ if $\alpha \leq 0$. Hence, given a sample $x$ from $Q$, the acceptance probability is
\[
  \frac{ P(x) }{ c \cdot Q(x) }
  = \frac{ \frac{1}{Z} Q(x) e^{\alpha U(x)} }
     { \frac{1}{Z} Q(x) e^{\alpha U^\ast} }
    = \frac{ e^{\alpha U(x)} }{ e^{\alpha U^\ast} }.
\]
\end{proof}

\paragraph{Efficiency.}

Notice that rejection sampling is equal to the search process defined in \ref{sec:commensurable}, where the probability of the target set is equal to
\begin{equation}
 p_\alpha = \sum_x Q(x) e^{\alpha[U(x) - U^\ast]}
 = \frac{ Z_\alpha }{ e^{\alpha U^\ast} }.
\end{equation}
In other words, obtaining \emph{any} sample is equivalent to obtaining a successful Bernoulli random variate $\mathcal{B}(p_\alpha)$. Thus, the number of samples until acceptance $N_\alpha$ follows a geometric distribution $\mathcal{G}(p_\alpha)$, and the expected value is $\expect[N_\alpha] = 1/p_\alpha$. Furthermore, the number of samples $N_\alpha(\delta)$ needed so as to guarantee acceptance with a small failure probability $\delta > 0$ is given by
\begin{equation}\label{eq:required-samples}
  N_\alpha(\delta) = \biggl\lceil \frac{ \log \delta }{ \log (1-p_\alpha) } \biggr\rceil.
\end{equation}
This function is plotted in Fig.~\ref{fig:number-samples}a. 

\paragraph{Limit efficiency.} What is the most efficient sampler? Assume w.l.g.\ that the inverse temperature $\alpha$ is fixed and strictly positive. From the definition of utilities \eqref{eq:convention-prob} and the probability-complexity equivalence \eqref{eq:cost} we get
\[
  U(x) = -\cost(x \cap P|x \cap Q) + C = \frac{1}{\alpha} \log \prob(x \cap P|x \cap Q) + C.
\]
Inspecting Fig.~\ref{fig:rejection-sampling}b, we see that in rejection sampling the probability of finding any posterior choice is such that $\prob(x \cap P|x \cap Q) = \exp\{ U(x) - U^\ast \}$. Using this substitution implies 
\begin{equation}\label{eq:offset-target}
  U(x) = U(x) - U^\ast + C \qquad \Longrightarrow \qquad U^\ast = C,
\end{equation}
that is, the target utility $U^\ast$ is exactly equal to the offset $C$ that transforms complexities into utilities. Thus, choosing an inverse temperature $\alpha$ and a target utility $U^\ast$ indirectly determine the decision complexities and thereby also the probability of accepting a sample~$p_\alpha$. Since decision complexities are positive and their absolute differences fixed through the utilities, the optimal choice of the target utility must be
\begin{equation}\label{eq:most-efficient}
  U^\ast = \max_x \{ U(x)\}.
\end{equation}
Picking a smaller value for $U^\ast$ is not permitted, as it would violate the assumptions about the underlying search model (Sec.~\ref{sec:commensurable}). 

\begin{figure}[ht]
\begin{center}
\includegraphics[width=0.9\textwidth]{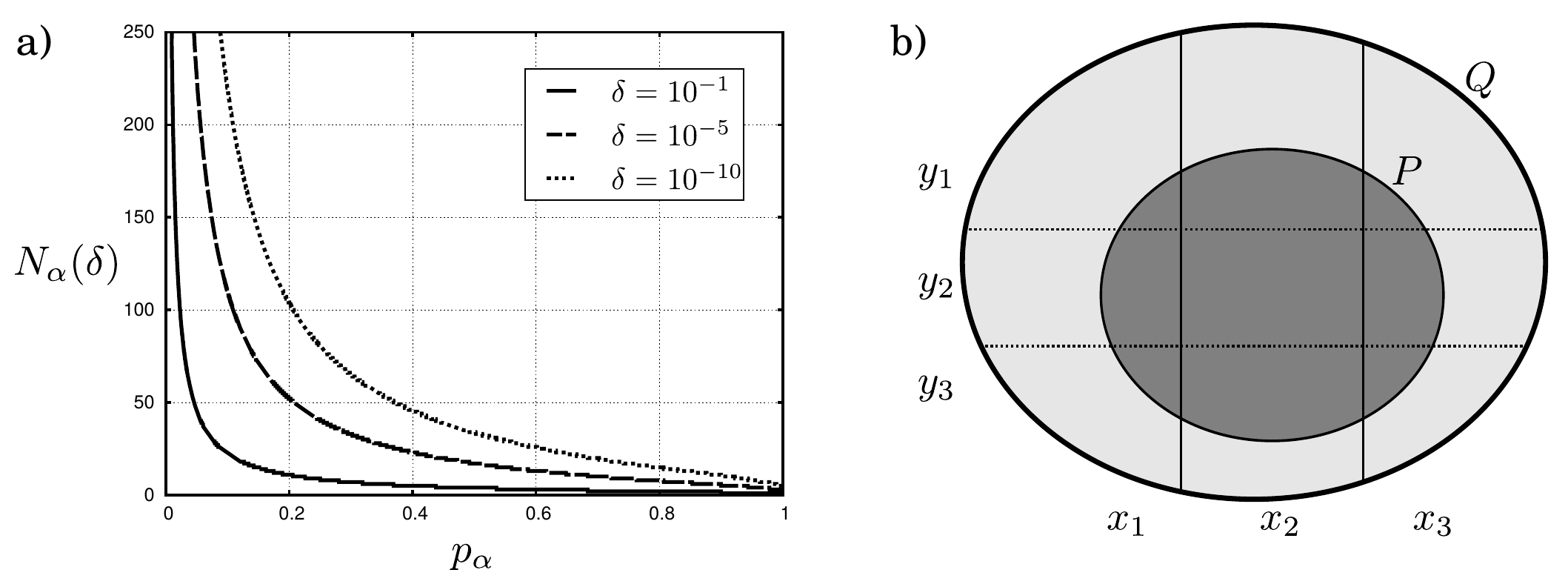}
\end{center}
\caption{a) Number of samples until acceptance. b) Granularity. Increasing the resolution of the choice set from $\mathcal{X}$ to $(\mathcal{X}\times\mathcal{Y})$ does not change the success probability.}
\label{fig:number-samples}
\end{figure}

\paragraph{Granularity.} Typically, one would expect that the number of samples to be inspected before making a decision depends on the number of available options. However, in the bounded-rational case, this is not so. Indeed, we can augment the granularity of the choice set without affecting the agent's decision effort. This is seen as follows. Assume that we extend the choice set from $\mathcal{X}$ to $(\mathcal{X} \times \mathcal{Y})$, with the understanding that every pair $(x,y) \in (\mathcal{X} \times \mathcal{Y})$ is a sub-choice of $x \in \mathcal{X}$. Then, the utilities $U(x)$ correspond to the certainty-equivalents of the utilities $U(x,y)$, that is
\begin{equation}\label{eq:granularity}
  U(x) 
  = \frac{1}{\alpha} \log Z_\alpha(x)
  = \frac{1}{\alpha} \log \biggl( \sum_{y \in x} Q(y|x) e^{\alpha U(x,y)} \biggr).
\end{equation}
Inserting this into the partition function for $x$, we get
\[
  Z_\alpha 
  = \sum_x Q(x) e^{\alpha U(x)}
  = \sum_{x,y} Q(x,y) e^{\alpha U(x,y)},
\]
that is, the partition function $Z_\alpha$ is independent of the level of resolution of the choice set. This, in turn, guarantees that the success probability of rejection sampling $p_\alpha = Z_\alpha / \exp\{ \alpha U^\ast \}$ stays the same no matter how we partition the choice set.

\subsection{Equivalence}\label{sec:equivalence}

There is more than one way to represent a given choice pattern. Two different decision problems can  lead to identical transformations of prior choice probabilities~$Q$ into posterior choice probabilities~$P$, and have the same certainty-equivalent. When these two conditions are fulfilled, we say that these decision problems are \emph{equivalent}. The concept of equivalence is important because a given decision problem can be re-expressed in a more convenient form when necessary. This will prove to be \emph{essential} when analyzing sequential decision problems later in the text.

\paragraph{Formal relation.}

Consider two equivalent bounded-rational decision problems $(\alpha, \set{X}, Q, U)$ and $(\beta, \set{X}, Q, V)$ with non-zero inverse temperatures $\alpha, \beta \neq 0$. Then, their certainty-equivalents are equal, that is,
\begin{equation}\label{eq:equivalent-ce}
  \frac{1}{\alpha} \log Z_\alpha = \frac{1}{\beta} \log Z_\beta,
\end{equation}
where the partition functions are $Z_\alpha = \sum_x Q(x) \exp\{\alpha U(x)\}$ and $Z_\beta = \sum_x Q(x) \exp\{ \beta V(x) \}$ respectively. The optimal choice probabilities of the second decision problem are equal to
\[
  P(x) 
  = \frac{1}{Z_\beta} \exp\bigl\{ \beta V(x) \bigr\} 
  = \exp\bigl\{ \beta [V(x) - \tfrac{1}{\beta} \log Z_\beta] \bigr\}
  = \exp\bigl\{ \beta [V(x) - \tfrac{1}{\alpha} \log Z_\alpha] \bigr\},
\]
where the last equality substitutes one certainty-equivalent for the other. Since these probabilities are equal to the ones of the first decision problem, we have
\[
  \exp\bigl\{ \beta[V(x) - \tfrac{1}{\alpha} \log Z_\alpha] \bigr\}
  = \exp\bigl\{ \alpha[U(x) - \tfrac{1}{\alpha} \log Z_\alpha] \}.
\]
Then, taking the logarithm and rearranging gives
\begin{equation}\label{eq:transf-ce}
  V(x) = \frac{\alpha}{\beta} U(x) + \left( \frac{1}{\alpha} - \frac{1}{\beta} \right) \log Z_\alpha.
\end{equation}
Thus, \eqref{eq:transf-ce} is an explicit formula for the relation between the inverse temperatures $\alpha, \beta$ and utilities $U(x), V(x)$ of two equivalent bounded-rational decision problems. Essentially, equivalent decision problems have utilities that are scaled versions of each other around the axis defined by the certainty-equivalent (see Fig.~\ref{fig:equivalent}). From the figure, we see that increasing the inverse temperature by a factor $c$ requires decreasing the distance of the utilities to the certainty-equivalent by a factor $1/c$, so that the product is maintained at all times.

\begin{figure}[ht]
\begin{center}
\includegraphics[width=12cm]{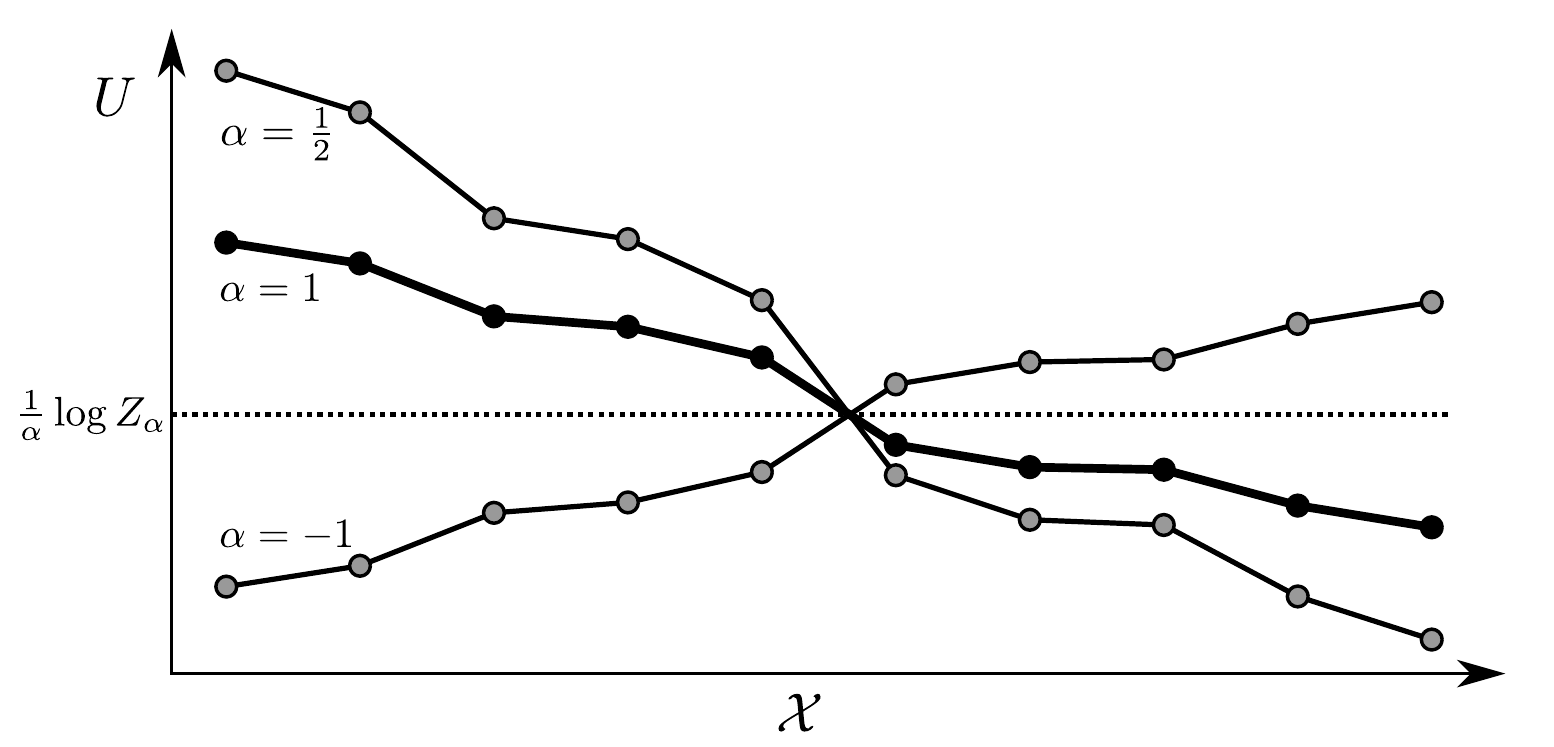}
\end{center}
\caption{Equivalent bounded-rational decision problems. The plot shows the utility curves for three equivalent decision problems with inverse temperatures $\alpha = 1, \frac{1}{2}$ and $-1$ respectively. The resulting utilities are scaled versions of each other with respect to the symmetry axis given by the certainty-equivalent $\frac{1}{\alpha}\log Z_\alpha$.}
\label{fig:equivalent}
\end{figure}

\paragraph{Sampling from equivalent decision problems.}

In rejection sampling, the success probabilities $p_\alpha$ and $p_\beta$ of two equivalent decision problems $(\alpha, \mathcal{X}, Q, U)$ and $(\beta, \mathcal{X}, Q, V)$ respectively are the same, that is,
\begin{equation}\label{eq:success-eq}
  p_\alpha = \frac{ Z_\alpha }{ e^{\alpha U^\ast} } = \frac{ Z_\beta }{ e^{\beta V^\ast} } = p_\beta,
\end{equation}
as long as their target utilities $U^\ast$ and $V^\ast$ obey the relation~\eqref{eq:transf-ce}, i.e.\ 
\begin{equation}\label{eq:conv-target}
  V^\ast = \frac{\alpha}{\beta} U^\ast 
  + \biggl( \frac{1}{\alpha} - \frac{1}{\beta} \biggr) \log Z_\alpha.
\end{equation}
An important problem is to sample directly from a decision problem $(\beta, \mathcal{X}, Q, V)$ given the inverse temperature $\alpha$ and the target utility $U^\ast$ of an equivalent decision problem. A naive way of doing so consists in using \eqref{eq:conv-target} and $\frac{1}{\alpha}\log Z_\alpha = \frac{1}{\beta} \log Z_\beta$ to derive an explicit formula for $V^\ast$:
\[
  V^\ast = \frac{\alpha}{\beta} U^\ast 
  + \biggl( 1 - \frac{\alpha}{\beta} \biggr) \frac{1}{\beta}\log Z_\beta.
\]
This formula requires integrating over the choice set to obtain the partition function~$Z_\beta$. However, this is a costly operation that does not scale to very large choice spaces: recall that all we are allowed to do is inspecting the utilities $V(x)$ for a few samples obtained from $Q(x)$. Instead, we can relate the success probability $p_\alpha$ to the samples obtained from $(\beta, \mathcal{X}, Q, V)$. This is seen by rewriting $p_\alpha$ as follows:
\[
  p_\alpha 
  = \exp\biggl\{ \alpha \Bigl[ \frac{1}{\alpha} \log Z_\alpha - U^\ast \Bigr] \biggr\}
  = \exp\biggl\{ \alpha \Bigl[ \frac{1}{\beta} \log Z_\beta - U^\ast \Bigr] \biggr\}.
\]
Then we express the inverse temperature as $\alpha = (\frac{\alpha}{\beta}) \cdot \beta$ and simplify the previous expression to
\begin{equation}\label{eq:sampling-equivalent}
  p_\alpha = p^\frac{\alpha}{\beta}, 
  \qquad \text{where} \qquad
  p := \biggl( \frac{Z_\beta}{e^{\beta U^\ast}} \biggr).
\end{equation}
This result has a convenient operational interpretation. The original problem, which consisted in obtaining one successful Bernoulli sample with success probability $p_\alpha$ has been rephrased as the problem of \emph{obtaining $\frac{\alpha}{\beta}$ successful Bernoulli samples} with probability of success~$p$. Intuitively, the reason behind this change is that the new success probability~$p$ can be larger/smaller than the original success probability $p_\alpha$, resulting in a more/less challenging search problem: therefore, \eqref{eq:sampling-equivalent} equalizes the search complexity by demanding less/more successful samples. Note that this conversion only works if the resulting rejection sampling problem fulfills the conditions of Theorem~\ref{theo:rejection}, that is: $U^\ast \geq \max_x \{V(x)\}$ for strictly positive $\beta$, or $U^\ast \leq V(x)$ for strictly negative $\beta$. To use~\eqref{eq:sampling-equivalent} effectively, we need to identify algorithms to sample an arbitrary, possibly non-integer amount of $\xi \in \mathbb{R}$ consecutive Bernoulli successes based only on a sampler for $\mathcal{B}(p)$, which in turn depends on the choice sampler $Q$. We first consider three base cases and then explain the general case.

\paragraph{Case $\xi \in \mathbb{N}$.} If $\xi$ is a natural number, then the Bernoulli random variate $Z \sim \mathcal{B}(p^\xi)$ is obtained in the obvious way by attempting to generate $\xi$ consecutive Bernoulli $\mathcal{B}(p)$ successes. If \emph{all} of them succeed, then $B$ is a success; otherwise $Z$ is a failure (see Algorithm~\ref{alg:sampling-natural}).

\begin{algoframed}
  \caption{Rejection-sampling trial for $\xi \in \mathbb{N}$}\label{alg:sampling-natural}
  \SetKwInOut{Input}{input}\SetKwInOut{Output}{output}
  \Input{A target $U^\ast$, a number $\xi \in \mathbb{N}$, and a choice sampler $Q$}
  \Output{A sample $x$ or failure $\epsilon$}
  \DontPrintSemicolon
  \For{$n=1,\ldots,\xi$}{
    Draw $x \drawnfrom Q(x)$ and $u \drawnfrom \mathcal{U}(0,1)$\;
    \lIf{$u > \exp\{ \beta [V(x) - U^\ast] \}$}{\Return $\epsilon$}
  }
  \Return $x$
\end{algoframed}

\paragraph{Case $\xi \in (0,1)$.} If $\xi$ is in the unit interval, then the Bernoulli success for $\mathcal{B}(p^\xi)$ is easier to generate than for $\mathcal{B}(p)$. Thus, we can tolerate a certain number of failures from $\mathcal{B}(p)$. The precise number is based the following theorem.

\begin{theorem}\label{theo:bernoulli}
Let $Z$ be a Bernoulli random variate with bias $(1-f_N)$ where
\[
  f_N = \sum_{n=1}^N b_n, \qquad\text{and}\qquad
  b_n = (-1)^{n+1} \frac{ \xi (\xi-1) (\xi-2) \cdots (\xi-n+1) }{ n! }
\]
for $0 < \xi < 1$ and where $N$ is a Geometric random variate with probability
of success $p$. Then, $Z$ is a Bernoulli random variate with bias $p^\xi$.
\end{theorem}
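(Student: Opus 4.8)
The plan is to compute the \emph{marginal} bias of $Z$ by averaging its conditional bias $1-f_N$ over the geometric law of $N$, and to recognize the coefficients $b_n$ as the terms of Newton's generalized binomial series for $p^\xi$. Writing $q := 1-p$, I would first observe that $b_n = (-1)^{n+1}\binom{\xi}{n}$, where $\binom{\xi}{n} = \frac{\xi(\xi-1)\cdots(\xi-n+1)}{n!}$ is the generalized binomial coefficient. For $0<\xi<1$ the factor $\xi$ is positive while $(\xi-1),\dots,(\xi-n+1)$ are negative, so $\binom{\xi}{n}$ carries sign $(-1)^{n-1}$ and hence every $b_n$ is \emph{strictly positive}. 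This sign bookkeeping is worth recording as a sanity check: since (as the next step shows) $\sum_{n\ge1} b_n = 1$, the partial sums satisfy $f_N \in (0,1)$, so $1-f_N$ is a legitimate Bernoulli bias.

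The analytic heart is the generalized binomial theorem. Applying it to $p^\xi = (1-q)^\xi$ with $|q|<1$ gives
\[
  p^\xi = \sum_{n\ge0}(-1)^n\binom{\xi}{n}q^n
        = 1 - \sum_{n\ge1}(-1)^{n+1}\binom{\xi}{n}q^n,
\]
so that $1 - p^\xi = \sum_{n\ge1} b_n q^n$. (Letting $q\to1^-$ here, together with positivity of the $b_n$ and Abel's theorem, also delivers $\sum_{n\ge1}b_n = 1$, closing the gap left in the previous paragraph.)

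It then remains to evaluate $\prob(Z=1) = \expect[\,1-f_N\,] = 1 - \expect[f_N]$, using the law of total probability over $N$. For $\expect[f_N]$ I would use the tail-sum identity
\[
  \expect\Bigl[\sum_{n=1}^N b_n\Bigr] = \sum_{n\ge1} b_n\,\prob(N\ge n),
\]
which is simply Tonelli's theorem and requires no delicate justification because every $b_n>0$. With the geometric convention $\prob(N\ge n)=q^n$ (that is, $N$ counts the number of $\mathcal{B}(p)$ failures preceding the first success, supported on $\{0,1,2,\dots\}$ with $\prob(N=k)=pq^k$), the tail sum becomes exactly $\sum_{n\ge1} b_n q^n = 1 - p^\xi$, whence $\prob(Z=1) = 1 - (1-p^\xi) = p^\xi$, as claimed.

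The step I expect to be the main obstacle is \emph{not} convergence — positivity of the $b_n$ makes every interchange automatic — but rather pinning down the geometric convention so that the tail probabilities $\prob(N\ge n)$ align precisely with the powers $q^n$ appearing in the binomial series. Under the alternative ``trials until first success'' convention one gets $\prob(N\ge n)=q^{n-1}$ and an off-by-one factor of $q$ that spoils the identity, so the proof must either fix the ``failures before first success'' convention explicitly or absorb the shift. Once that alignment is fixed, the remaining computation is the routine substitution above.
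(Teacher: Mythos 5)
Your proof is correct and follows essentially the same route as the paper's: both expand $p^\xi$ in powers of $1-p$ about $p=1$ (you via the generalized binomial theorem, the paper via Taylor's theorem), establish $0\le b_n$ and $\sum_n b_n=1$, and then match $1-p^\xi=\sum_{n\ge1}b_n(1-p)^n$ against $\expect[f_N]$ using the geometric tail probabilities $\prob(N\ge n)=(1-p)^n$. Your explicit fixing of the geometric convention (failures before the first success, supported on $\{0,1,2,\dots\}$) and your Tonelli justification of the tail-sum interchange are exactly the steps the paper carries out by diagonally re-indexing the double sum, so there is no substantive difference.
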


An efficient use of this sampling scheme is as follows. First, we generate an upper bound $f^\ast \sim \mathcal{U}(0,1)$ that will fix the maximum number of tolerated failures and initialize the ``trial counter'' to $f = 0$. Then, we repeatedly attempt to generate a successful Bernoulli sample $Z \sim \mathcal{B}(p)$ as long as $f < f^\ast$. If it succeeds, we return the sample; otherwise, we add a small penalty to the counter $f$ and repeat---see Algorithm~\ref{alg:sampling-unit-int}.

\begin{algoframed}
\caption{Rejection-sampling trial for $\xi \in (0,1)$}\label{alg:sampling-unit-int}
  \SetKwInOut{Input}{input}\SetKwInOut{Output}{output}
  \Input{A target $U^\ast$, a number $\xi \in (0,1)$, and a choice sampler $Q$}
  \Output{A sample $x$ or failure $\epsilon$}
  \DontPrintSemicolon
  $f^\ast \drawnfrom \mathcal{U}(0,1)$\;
  Set $b \leftarrow -1$, $f \leftarrow 0$, and $k \leftarrow 1$\;
  Draw $x \drawnfrom Q(x)$ and $u \drawnfrom \mathcal{U}(0,1)$\;
  \While{$u > \exp\{\beta[V(x)-U^\ast]\}$}{
    Set $b \leftarrow -b \cdot \frac{(\xi-k+1)}{k}$, $f \leftarrow f + b$,
    and $k \leftarrow k + 1$\;
    Draw $x \drawnfrom Q(x)$ and $u \drawnfrom \mathcal{U}(0,1)$\;
    \lIf{$f^\ast \leq f$}{\Return $\epsilon$}}
 \Return $x$\;
\end{algoframed}

\paragraph{Case $\xi = -1$.} If $\xi$ is $-1$, then the interpretation of the target utility $U^\ast$ is flipped: an upper-bound becomes a lower-bound and \textit{vice versa}. The resulting success probability is then equal to
\[
  p_\alpha = \frac{e^{\beta U^\ast}}{ Z_\beta }.
\]
We do not know how to efficiently generate a sample from the \emph{inverse} partition function. Instead, we use the following trick: we invert our acceptance criterion by basing our comparison on \emph{reciprocal probabilities} as shown in Algorithm~\ref{alg:sampling-inverse}.

\begin{algoframed}
\caption{Rejection-sampling trial for $\xi = -1$}\label{alg:sampling-inverse}
  \SetKwInOut{Input}{input}\SetKwInOut{Output}{output}
  \Input{A target $U^\ast$ and a choice sampler $Q$}
  \Output{A sample $x$ or failure $\epsilon$}
  \DontPrintSemicolon
  Draw $x \drawnfrom Q(x)$ and $u \drawnfrom \mathcal{U}(0,1)$\;
  \lIf{$1/u < \exp\{ \beta [V(x) - U^\ast] \}$}{\Return $\epsilon$}
  \Return $x$
\end{algoframed}

\paragraph{General case.} The rejection sampling algorithm for an arbitrary value of $\xi \in \mathbb{R}$ is constructed from the preceding three cases. First, the sign of $\xi$ determines whether we will base our acceptance criterion on either probabilities or reciprocal probabilities. Then, we decompose the absolute value $|\xi|$ into its integer and unit-interval parts, applying the associated algorithms to generate a choice. 

\comment{
\paragraph{Efficiency.}
\begin{mdframed}
 Explain that the number of samples in an equivalence transformation varies, but the number of bits to be generated is the same!
\end{mdframed}
} %

\subsection{Comparison}

We finish this section with a brief comparison of decision-making based upon expected utility versus free energy. Table~\ref{tab:comparison} tabulates the differences according to several criteria explained in the following. The \emph{rationality paradigm} refers to general decision-making rationale of the agent. We have seen that perfect rationality and bounded rationality can be regarded as choice models that are valid approximations to different scales or \emph{domain sizes}. The different scales, in turn, suggest different \emph{search strategies} and \emph{search scopes}: the perfect rational agent can use a fixed, deterministic rule that exploits the symmetries in the choice set in order to single out the optimal choice; while the bounded-rational agent has to settle on a satisficing choice found through random inspection---as if it were a search in a pile of straw. The two objective functions also differ in their dependency on the choice probabilities. The \emph{functional form} is such that this dependency is linear in the expected utility case and non-linear in the free energy case. Because of this, the perfectly rational agent's preferences depend only upon the expected value of the utility; whereas a bounded-rational agent also takes into account the higher-order moments of the utility's distribution (\emph{utility sensitivity}), which is easily seen through a Taylor expansion of the KL-divergence term of the free energy.

\begin{table}[ht]
\centering
\caption{Comparison of Decision Rules.}\label{tab:comparison}
\begin{tabular}{lcc}
\toprule
Objective function    & Expected Utility & Free Energy \\
\midrule 
Rationality paradigm  & perfect       & bounded      \\
Preferred domain size & small         & large        \\
Search strategy       & deterministic & randomized   \\
Search scope          & complete      & incomplete   \\
Functional form       & linear        & non-linear   \\
Utility sensitivity   & first moment  & all moments  \\
\bottomrule
\end{tabular}
\end{table}

\section{Sequential Decisions}\label{sec:sequential-decisions}

In the previous section we have fleshed out the basic theory for \emph{single-step} bounded-rational decision making. In real-world applications however, agents have to plan ahead over \emph{multiple} time steps and interact with another system called the \emph{environment}. In these \emph{sequential decision problems}, agents have to devise a \emph{policy}---that is, a decision plan---that prescribes how to act under any situation that the agent might encounter in the future.

\begin{figure}[ht]
\centering
\includegraphics[width=\textwidth]{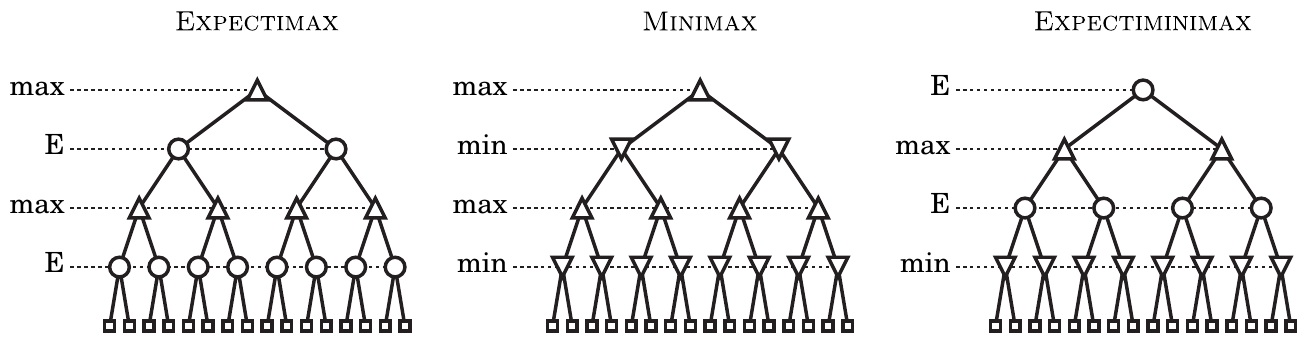}
\caption{Decision trees}\label{fig:gametrees}
\end{figure}

Like in single-step decision problems, policies are chosen using a decision rule. The classical decision rules that are used in the literature depend upon the type of system the agent is interacting with. Three such decision rules that are popular are: \textsc{Expectimax}, when the agent is interacting with a stochastic environment; \textsc{Minimax}, where the agent is playing against an adversary; and \textsc{Expectiminimax} in games with both an adversary and chance elements such as in Backgammon. The dynamical structure of the interactions between the agent and the environment is captured in a \emph{decision tree} (or a \emph{game tree}) like those illustrated in Figure~\ref{fig:gametrees}. These decision trees were built from composing the primitives $\bigtriangledown$, $\bigcirc$, and $\bigtriangleup$ (discussed in Section~\ref{sec:seu}) representing an adversarial, stochastic, and friendly transition respectively. The optimal solution is then obtained using dynamic programming\footnote{Also known as ``solving the \emph{Bellman optimality equations}'' in the control and reinforcement learning literature, and \emph{backtracking} in the economics literature.} by recursively calculating the certainty-equivalent, and then taking the transition promising the highest value. Notice that, for planning purposes, it is immaterial whether the transitions are eventually taken by the agent or by the environment: all what matters is the degree to which a particular transition contributes towards the agent's overall objective. Thus, a max-node $\bigtriangleup$ can stand both for the agent's action or another player's cooperative move for instance.

\begin{figure}[ht]
\centering
\includegraphics[width=0.9\textwidth]{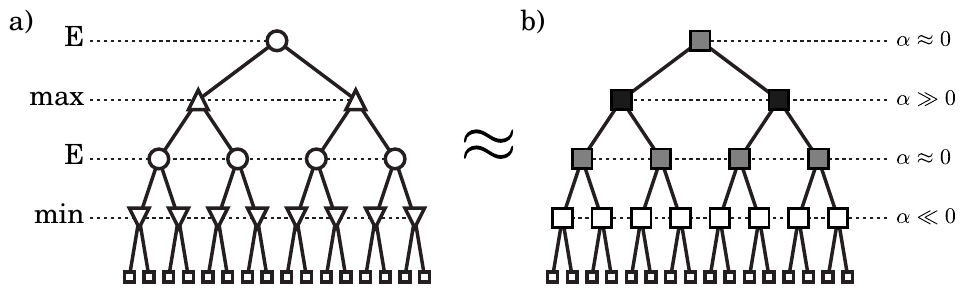}
\caption{A bounded-rational approximation. Classical decision trees, such as the \textsc{Expectiminimax} decision tree, can be approximated with a bounded-rational decision tree (b) by substituting the classical nodes with bounded-rational counterparts. Recall that the nodes are color-coded, with darker nodes corresponding to higher inverse temperatures.}\label{fig:gm-approx}
\end{figure}

These classical decision trees can be approximated by bounded-rational decision trees. Recall from Section~\ref{sec:seu} that the classical certainty-equivalent operators $\bigtriangledown$, $\bigcirc$, and $\bigtriangleup$ can be approximated by bounded-rational decision problems with appropriately chosen inverse temperatures. The very same idea can be used to substitute the nodes in a classical decision tree (see \textit{e.g.} Figure~\ref{fig:gm-approx}).

\begin{figure}[ht]
\centering
\includegraphics[width=\textwidth]{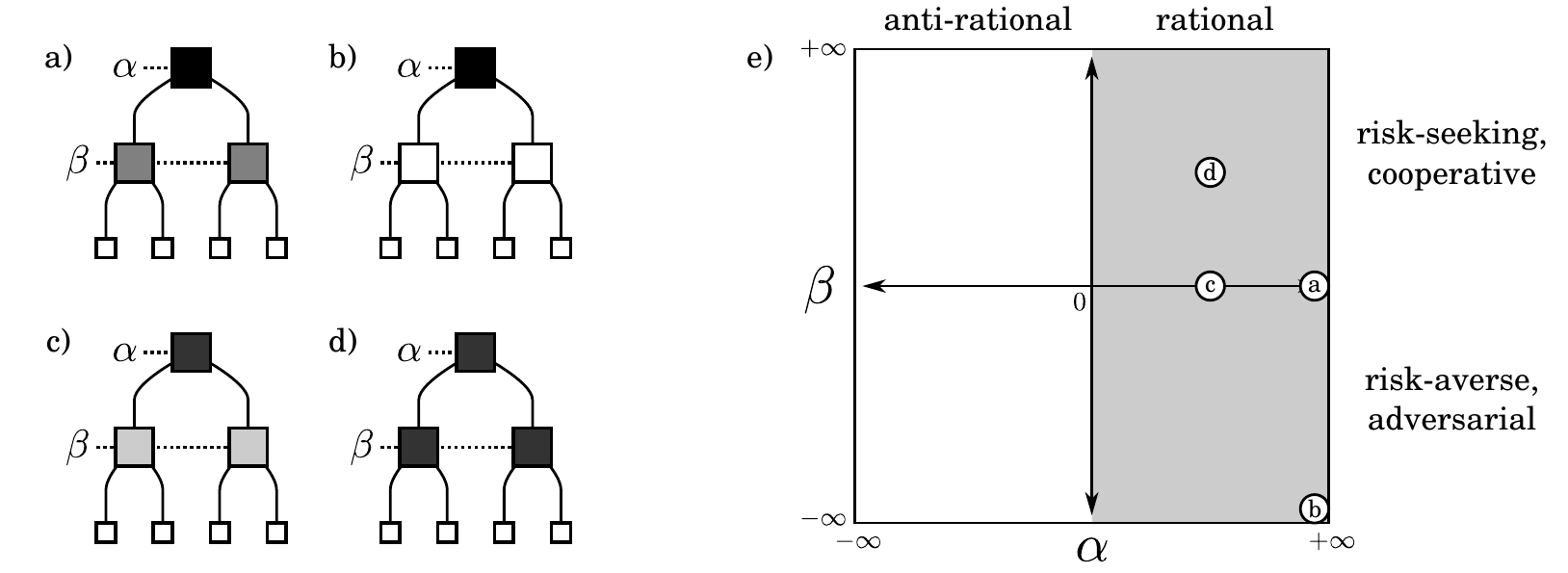}
\caption{Map of bounded-rational decision rules. Panels (a--d) depict several two-step decision problems, where the agent and environment interact with inverse temperatures $\alpha$ and $\beta$ respectively: a) case $\alpha \gg 0, \beta \approx 0$ approximates expected utility; b) case $\alpha \gg 0, \beta \ll 0$ is an approximation of a minimax/robust decision; c) case $\alpha > 0, \beta \approx 0$ is a bounded-rational control of a stochastic environment; and d) $\alpha > 0, \beta > 0$ corresponds to risk-seeking bounded-rational control. Panel~(e) shows a map of the decision rules.}\label{fig:map-of-rules}
\end{figure}

This approximation immediately suggests a much broader class of bounded-rational sequential decision problems that is interesting in its own right. The decision trees in this class are such that each node can have its own inverse temperature to model a variety of information constraints that result from \emph{resource} limitations, \emph{risk}, and \emph{trust} sensitivity. Figure~\ref{fig:map-of-rules} illustrates some example decision types. Our next goal is to formalize these bounded-rational decision trees, explain how they follow from the single-step case, and present a sampling algorithm to solve them.

\subsection{Bounded-rational decision trees}

\paragraph{Definition.}
A \emph{bounded-rational decision tree} is a tuple $(T, \set{X}, \alpha, Q, R, F)$ with the following components. $T \in \nats$ is the \emph{horizon}, \textit{i.e.}\ the depth of the tree. $\mathcal{X}$ is the \emph{set of interactions} or \emph{transitions} of the tree. We assume that it is finite, but potentially \emph{very} large. Together with the horizon, it gives rise to  an associated \emph{set of states} (or nodes) of the tree $S$, defined as
\[
  S := \bigcup_{t=0}^T \mathcal{X}^t.
\]
Each member $s \in S$ is a path of length $t \leq T$ that uniquely identifies a node in the tree. Excepting the (empty) root node $\epsilon \in S$, every other member $s' \in S$ can be reached from a preceding node $s \in S$ as long as $sx = s'$ for a transition $x \in \mathcal{X}$. The function $\alpha: S \rightarrow \mathbb{R}$ is the \emph{inverse temperature function}, and it assigns an inverse temperature $\alpha(s)$ to each node $s \in S$ in the tree. We will assume that $\alpha(s) \neq 0$ for all $s \in S$. The conditional probability distribution $Q(\cdot|\cdot)$ defines the \emph{prior transition probabilities}: thus, $Q(x|s)$ corresponds to the probability of moving to node $s'=sx \in S$ from $s \in S$ via the transition $x \in \mathcal{X}$. Obviously, $\sum_x Q(x|s) = 1$ for each node $s \in S$. Every transition has, in turn, an associated \emph{conditional reward}~$R(x|s) \in \mathbb{R}$. We assume that these rewards are additive, so that $R(uv|w) = R(u|w) + R(u|v,w)$ for any disjoint sets $u, v, w$. Finally, $F: S \rightarrow \mathbb{R}$ is the \emph{terminal certainty-equivalent function} which attaches a value $F(s)$ to each terminal state $s \in \mathcal{X}^T$. Fig.~\ref{fig:br-decision-tree} shows an example.

\begin{figure}[ht]
\centering
\includegraphics[width=\textwidth]{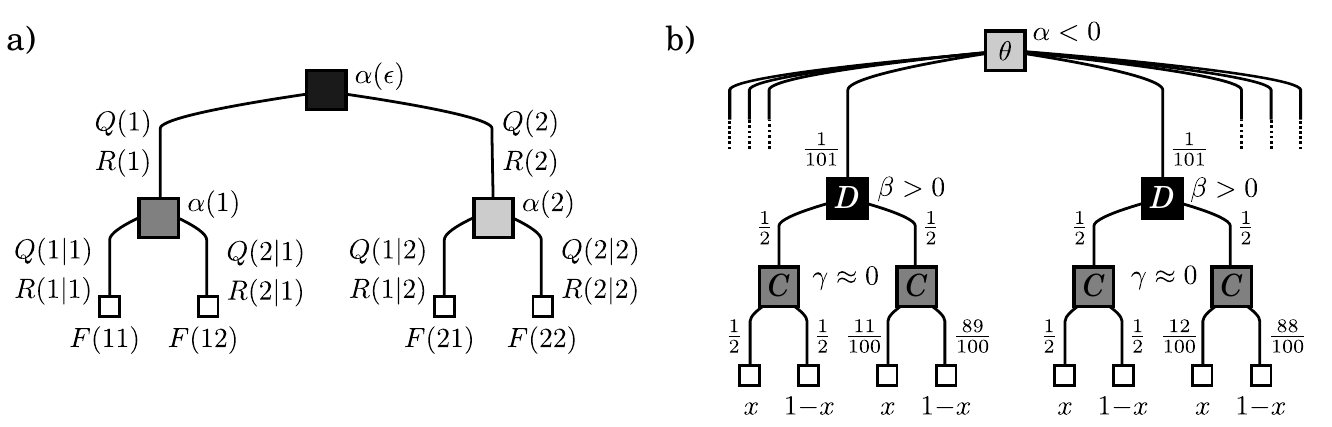}
\caption{Bounded-rational decision trees. Panel (a) shows a simple bounded-rational decision tree with two steps and $\mathcal{X} = \{1, 2\}$. These can be used to model complicated situations such as the \textit{Ellsberg Paradox} discussed in Section~\ref{sec:open-problems} shown in panel~(b). This tree models the following three random variables: $\theta$, the proportion of black \& white balls in the box (only two of its 101 children nodes are shown); $D$, the agent's bet on the left or right urn; and $C$, corresponding to the color of the ball drawn from the chosen box. We do not annotate the transitions with rewards, as we assume that they are all equal to zero. Notice that the initial choice of $\theta$ is made adversarially ($\alpha < 0$) in spite of the uniform prior over the 101 possible combinations. Because of this, the agent biases his bet towards the left box, irrespective of whether black ($x=1$) or white ($x=0$) wins.}\label{fig:br-decision-tree}
\end{figure}

\paragraph{Sequence of interactions.}
The bounded-rational decision tree models a (prior) probability distribution over random sequences $X_1, \ldots, X_T$ of length $T$. It suggests a chronological generative model: at time~$t$, \textit{i.e.}\ after having observed the $t-1$ previous interactions $X_{<t} = x_{<t}$, it draws the next interaction $X_t$ from the conditional $Q(X_t|X_{<t})$. The agent has the ability to influence the course of realization of the interaction sequence by changing the prior $Q$ into a posterior $P$. However, the extent to which it can do so depends on the state's inverse temperature.

\paragraph{Free energy.}
As in the single-step scenario, the objective function optimized by the agent is the free energy functional. For bounded-rational decision trees, the free energy functional $F[\tilde{P}]$ is given by
\begin{equation}\label{eq:fe-dt}
    F[\tilde{P}] =
    \sum_{x_{\leq T}} \tilde{P}(x_{\leq T})
        \biggl\{ \sum_{t=1}^T \biggl[
        R(x_t|x_{<t})
        - \frac{1}{\alpha(x_{<t})} \log \frac{\tilde{P}(x_t|x_{<t})}{Q(x_t|x_{<t})}
    \biggr]
    + F(x_{\leq T}) \biggr\}.
\end{equation}
Inspection of \eqref{eq:fe-dt} reveals that it is an expectation taken w.r.t.\ the realizations of the tree, where each node $s \in S$ contributes a free energy term of the form
\[ 
  \sum_{x} P(x|s) R(x|s) - \frac{1}{\alpha(s)} \sum_x P(x|s) \log\frac{ \tilde{P}(x|s) }{ Q(x|s) } 
\]
that is specific to the transition. In addition, each leaf supplies an extra term $F(s)$, which can be thought of as the certainty-equivalent of the future after time step $T$ that is not explicitly represented in the decision-tree.

\subsection{Derivation of the free energy functional} 

The free energy functional for decision trees is not arbitrary: it is \emph{derived} from the free energy functional for the single-step case \textit{via} equivalence transformations. The derivation is conceptually straightforward (see Fig.~\ref{fig:ms-cons}), although notationally cumbersome. 

\paragraph{Construction.}
Our goal is to transform a bounded-rational decision problem represented by the tuple $(\beta, \mathcal{X}^T, Q, V)$ into a bounded-rational decision tree $(T, \mathcal{X}, \alpha, Q, R, F)$, where the inverse temperature function $\alpha$ is prescribed and the rewards $R$ and $F$ are to be determined, and where the other members $T$, $\mathcal{X}$ and $Q$ stay fixed. The free energy functional of the single-step decision problem is
\begin{equation}\label{eq:fe-nested-1}
  F[\tilde{P}] =
  \sum_{x_{\leq T}} \tilde{P}(x_{\leq T}) V(x_{\leq T}) 
  - \frac{1}{\beta} \sum_{x_{\leq T}} \tilde{P}(x_{\leq T}) \log\frac{ \tilde{P}(x_{\leq T}) }{ Q(x_{\leq T}) }.
\end{equation}
This free energy with the inverse temperature $\beta$ and utility function $V$ can be restated as an equivalent bounded-rational decision problem with inverse temperature $\alpha(\epsilon)$ and utility function $U_\epsilon$, where $\alpha(\epsilon)$ is the prescribed value for the root node of the desired decision tree. The subindex in $U_\epsilon$ is just to remind us that the utility function is paired to inverse temperature $\alpha(\epsilon)$ of the root node. This yields
\begin{equation}\label{eq:fe-nested-2}
  F[\tilde{P}] =
  \sum_{x_{\leq T}} \tilde{P}(x_{\leq T}) U_\epsilon(x_{\leq T}) 
  - \frac{1}{\alpha(\epsilon)} \sum_{x_{\leq T}} \tilde{P}(x_{\leq T}) \log\frac{ \tilde{P}(x_{\leq T}) }{ Q(x_{\leq T}) }.
\end{equation}
Recall that this does not change the resulting optimal choice, as equivalent decision problems share the prior, the posterior, and the certainty-equivalent. In Fig.~\ref{fig:ms-cons}, this corresponds to the transformation from (a) to (b).

In order to introduce step-wise reinforcements of decision trees, we assume that the utilities can be recursively broken down into a sum of an instantaneous reward plus the utility of the remaining tail. For the first step, this is
\begin{equation}\label{eq:utility-additive}
  U_\epsilon(x_{\leq T}) = R(x_1) + U_\epsilon(x_{2:T}|x_1),
\end{equation}
where $R(x_1)$ is the reward of the first step and $U_\epsilon(x_{2:T}|x_1)$ is the utility of the tail $x_{2:T}$ rooted at $x_1$. How much reward we place in $R(x_1)$ is arbitrary, as long as all the paths starting with $x_1$ put the same amount of reward into this transition. Substituting this back into \eqref{eq:fe-nested-1} and rearranging yields
\begin{align}
  \nonumber
  F[\tilde{P}] 
  &= \sum_{x_{\leq T}} \tilde{P}(x_{\leq T}) \biggl\{ R(x_1) + U_\epsilon(x_{2:T}|x_1) \biggr\} 
  - \frac{1}{\alpha(\epsilon)} \sum_{x_{\leq T}} \tilde{P}(x_{\leq T}) \log\frac{ \tilde{P}(x_{\leq T}) }{ Q(x_{\leq T}) } \\
  \label{eq:fe-nested-3}
  &= \sum_{x_1} \tilde{P}(x_1) 
    \biggl\{ R(x_1) - \frac{1}{\alpha(\epsilon)}\log\frac{ \tilde{P}(x_1) }{ Q(x_1) } + F[\tilde{P}](x_1) \biggr\},
\end{align}
where we have defined the free energies $F[\tilde{P}](x_1)$ as
\begin{equation}\label{eq:fe-nested-4}
  F[\tilde{P}](x_1) = \sum_{x_{2:T}} \tilde{P}(x_{2:T}) U_\epsilon(x_{2:T}|x_1) 
  - \frac{1}{\alpha(\epsilon)} \sum_{x_{2:T}} \tilde{P}(x_{2:T}|x_1) 
    \log\frac{ \tilde{P}(x_{2:T}|x_1) }{ Q(x_{2:T}|x_1) }.
\end{equation}
Note that through this operation, we have split the original single-step decision problem over choices $x_{\leq T} \in \mathcal{X}^T$ into a two-step decision problem in which the first step is a choice among the $x_1 \in \mathcal{X}$ and the second among the tails $x_{2:T} \in \mathcal{X}^{T-1}$. That is, each initial choice~$x_1$ leads to separate single-step decision problem of the form $(\alpha(\epsilon), \mathcal{X}^{T-1}, Q(\cdot|x_1), U_\epsilon)$.

\begin{figure}[t]
\centering
\includegraphics[width=\textwidth]{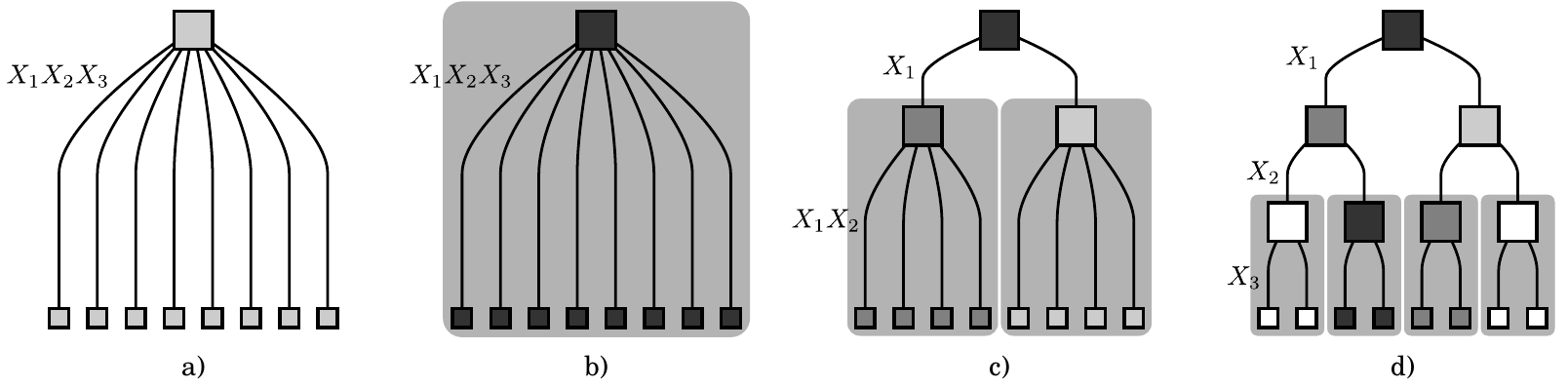}
\caption{Construction of a bounded-rational decision tree. Panel a: Starting from a (single-step) bounded-rational decision problem over a product space $\mathcal{X}^T$, we apply an equivalence transformation to set a new inverse temperature, split each choice into a transition (head) and a tail sequence, and then recur on the tails until the desired bounded-rational decision tree is fully built (Panels~b--d).}\label{fig:ms-cons}
\end{figure}

We now continue this process for $t=2,3,\ldots,T-1$, recurring in a breadth-first fashion on the nested free energies $F[\tilde{P}](x_1)$, $F[\tilde{P}](x_{\leq 2})$, \ldots, $F[\tilde{P}](x_{\leq T-2})$. Every time we do so, we first substitute the decision problem with an equivalent one having inverse temperature $\alpha(x_{\leq t})$, utility function $U_{x_{\leq t}}$, and then split the utilities into a reward $R(x_t|x_{<t})$ and a tail utility $U_{x_{\leq t}}(x_{t+1:T}|x_{\leq t})$. This results in the following expression of nested free energies:
\begin{align}
  \nonumber
  F[\tilde{P}]
  = & \sum_{x_1} \tilde{P}(x_1) 
    \biggl\{ R(x_1) - \frac{1}{\alpha(\epsilon)}\log\frac{ \tilde{P}(x_1) }{ Q(x_1) } \\
    \nonumber
    & + \sum_{x_2} \tilde{P}(x_2|x_1)
    \biggl\{ R(x_2|x_1) - \frac{1}{\alpha(x_1)}\log\frac{ \tilde{P}(x_2|x_1) }{ Q(x_2|x_1) } \\
    \nonumber
    & \phantom{+} \vdots \\
    \label{eq:fe-nested-5}
    & \phantom{+} + \sum_{x_{T-1}} \tilde{P}(x_{T-1}|x_{<T-1})
    \biggl\{ F[\tilde{P}](x_{<T}) \biggr\} \cdots \biggr\}\biggr\}.
\end{align}
Finally, for the base case of the recursion given by the collection of the innermost free energies $F[\tilde{P}](x_{<T})$, which are equal to
\[
  F[\tilde{P}](x_{<T}) = \sum_{x_T} P(x_T|x_{<T}) U_{x_{<T}}(x_T|x_{<T}) 
  - \frac{1}{\alpha(x_{<T})} \sum_{x_T} P(x_T|x_{<T}) \log \frac{ \tilde{P}(x_T|x_{<T}) }{ Q(x_T|x_{<T}) },
\]
we split the utilities arbitrarily into a reward $R(x_T|x_{<T})$ and a terminal free energy $F(x_{\leq T})$, that is,
\[
  U_{x_{<T}}(x_T|x_{<T}) = R(x_T|x_{<T}) + F(x_{\leq T}).
\]
Inserting this back into \eqref{eq:fe-nested-5} and rearranging the probabilities gives the desired free energy functional for the bounded rational decision tree \eqref{eq:fe-dt}. 

\paragraph{Insights from the construction.}

This derivation shows that the multi-step case does not require introducing any additional theory---everything can be reduced to the building blocks already developed in Section~\ref{sec:single-step} for the single-step case.

It is worthwhile pointing out that the sequence of steps we took in order to derive the decision tree's free energy functional from the single-step decision problem is completely reversible. It is easy to see how to extend this procedure to formulate an algorithm that takes any source decision-tree and transforms it into another equivalent decision tree with a prescribed inverse temperature function. In other words, every bounded-rational decision tree $(T, \set{X}, \alpha, Q, R, F)$ can be transformed into another one $(T, \set{X}, \alpha', Q, R', F')$ with an arbitrarily chosen inverse temperature function $\alpha'$ as long as both $\alpha$ and $\alpha'$ are non-zero. In this case, we say that these decision trees are equivalent.

This equivalence relation induces a quotient space (\textit{i.e.} a set of equivalence classes) on the set of bounded-rational decision trees that has rather counterintuitive properties. For instance, it turns out that a bounded-rational \textsc{Minimax} decision-problem can be turned into an equivalent anti-rational, risk-seeking decision-problem, where the agent picks an deleterious action in the hope that the environment will save it!

\subsection{Bellman recursion and its solution}

Given a bounded-rational decision tree $(T, \mathcal{X}, \alpha, Q, R, F)$, we can express its associated free energy functional \eqref{eq:fe-dt} as a Bellman recursion. The Bellman recursion has the advantage of offering a simplified, \emph{functional} view of the sequential planning problem that facilitates the characterization of the optimal policy. 

\paragraph{Free energy.}
The recursive form of the free energy functional is readily obtained from the nested form of the free energy~\eqref{eq:fe-nested-5}. First, for each leaf node $s \in S$ we equate the free energy functional to the certainty-equivalent, that is, $F[\tilde{P}](s) = F(s)$. These correspond to the base cases of the recursion. Then, for each internal node $s \in S$, we have
\begin{equation}\label{eq:fe-rec}
  F[\tilde{P}](s) = \sum_{x} P(x|s) \biggl[ R(x|s) 
    - \frac{1}{\alpha(s)} \log \frac{ \tilde{P}(x|s) }{ Q(x|s) }
    + F[\tilde{P}](sx) \biggr].
\end{equation}
This was obtained by exploiting the self-similar structure of~\eqref{eq:fe-nested-5}.

\paragraph{Certainty-equivalent.}

Before we state the optimal solution, it is convenient to first present the recursive characterization of the certainty-equivalent. For this, consider an internal node $s \in S$. Once the certainty-equivalents $F(sx)$ of its children nodes become available, the multi-step decision problem rooted at $s$ reduces to a single-step decision problem $(\alpha(s), \mathcal{X}, Q(\cdot|s), U_s)$ where the utility function is given by the sum $U_s(x) := R(x|s) + F(sx)$ of the instantaneous reward and the summarized future rewards for all $x \in \mathcal{X}$. Hence, using the formula for the certainty-equivalent \eqref{eq:certainty-equivalent} we get  
\begin{equation}\label{ce-rec}
 F(s) = \frac{1}{\alpha(s)} \log \sum_x Q(x|s) 
  \exp\biggl\{ \alpha(s) \Bigl[ R(x|s) + F(sx) \Bigr] \biggr\}.
\end{equation}
This can be regarded as an operator on the transition probabilities, rewards and free energies of the child nodes (Figure~\ref{fig:smp-rec}a).

\begin{figure}[t]
\centering
\includegraphics[width=\textwidth]{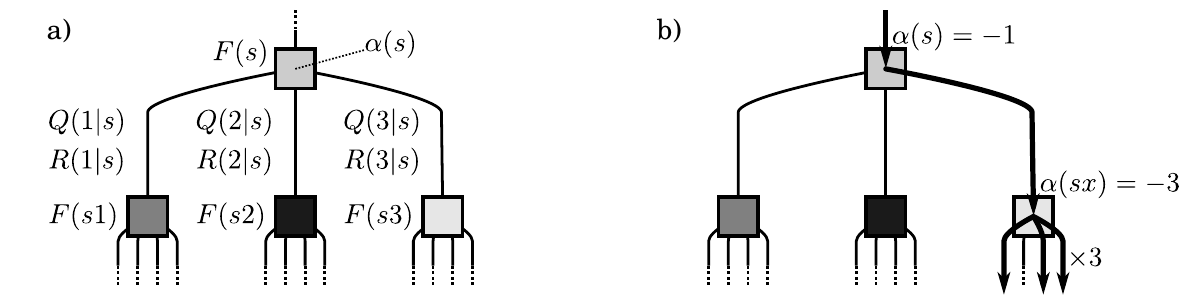}
\caption{a) The certainty-equivalent $F(s)$ of the node $s \in S$ is calculated from the transition probabilities $Q(x|s)$, immediate rewards $R(x|s)$, and certainty-equivalents $F(sx)$ of its children nodes. b) When a path's inverse temperature changes from $\alpha(s)$ to $\alpha(sx)$, then the sampler must equalize this transition by requiring $\alpha(sx)/\alpha(s)$ successful samples from the subtree rather than just one. }\label{fig:smp-rec}
\end{figure}

\paragraph{Optimal solution.} Similarly, given an internal node $s \in S$ and the certainty-equivalents $F(sx)$ of its children nodes, the optimal decision at $s$ is obtained immediately from the equation for the single-step case~\eqref{eq:optimal}:
\begin{equation}\label{eq:opt-rec}
    P(x|s) = \frac{1}{Z(s)}
        Q(x|s) \exp\biggl\{
        \alpha(s) \Bigl[ R(x|s)
            + F(sx)
            \Bigr] \biggr\},
\end{equation}
where $Z(s)$ is the partition function which normalizes the probabilities. Notice that the immediate rewards $R(x|s)$ (given $\alpha(s)$ and the $Q(x|s)$) do not form an alternative representation of the probabilities $P(x|s)$. Rather, a complete specification requires taking the sums $R(x|s) + F(sx)$ as primitives for the representation.

\subsection{Recursive rejection sampling}

Similarly to the single-step decision case, choosing an optimal action amounts to sampling from the optimal choice distribution~\eqref{eq:opt-rec}. The naive way of doing this consists in computing~\eqref{eq:opt-rec} recursively and then to sample from it. A second, slightly more elaborate way, consists in transforming the original bounded-rational decision tree into a single-step decision problem by setting every node's inverse temperature to a single value $\beta$ using equivalence transformations, and then to sample an optimal choice using rejection sampling as explained in Section~\ref{sec:stochastic-choice}. Unfortunately, both aforementioned methods depend on computing the free energies~$F(sx)$, and this operation does not scale to very large choice spaces.

Instead, we can sample a path $\tau \in \mathcal{X}^T$ from \eqref{eq:opt-rec} using a recursive form of rejection sampling without ever calculating a free energy. To do so, we first make the following two observations that follow immediately from the single-step case. First, the probability of obtaining \emph{any} successful sample using rejection sampling is a Bernoulli random variable with success probability
\begin{equation}\label{eq:sp-rec}
  \biggl( \frac{Z(s)}{e^{\alpha(s) U^\ast(s)}} \biggr)
  = \sum_x Q(x|s) \frac{ \exp\Bigl\{ \alpha(s) \Bigl[R(x|s) + F(sx)\Bigr] \Bigr\} }{ \exp\Bigl\{ \alpha(s) U^\ast(s) \Bigr\} },
\end{equation}
where $U^\ast(s)$ is a given target utility for node $s \in S$ that has to be either larger than $\max_x \{R(x|s) + F(sx)\}$ if $\alpha(s) > 0$ or smaller than $\min_x \{R(x|s) + F(sx)\}$ if $\alpha(s) < 0$ respectively.
Second, the free energy and the partition function are related as
\begin{equation}\label{eq:fe-pf}
  F(s) = \frac{1}{\alpha(s)} \log Z(s).
\end{equation}
If $s \in S$ is an internal node, then we can use the latter equation to substitute the free energies in expression \eqref{eq:sp-rec}. This in turn allows us to rewrite the success probability recursively:
\begin{align}
  \nonumber
  \biggl( \frac{Z(s)}{e^{\alpha(s) U^\ast(s)}} \biggr)
  &= \sum_x Q(x|s) \frac{ \exp\Bigl\{ \alpha(s) \Bigl[R(x|s) + \frac{1}{\alpha(sx)} \log Z(sx)\Bigr] \Bigr\} }{ \exp\Bigl\{ \alpha(s) U^\ast(s) \Bigr\} } \\
  \nonumber
  &= \sum_x Q(x|s) \frac{ \exp\Bigl\{ \alpha(s) \Bigl[\frac{1}{\alpha(sx)} \log Z(sx)\Bigr] \Bigr\} }{ \exp\Bigl\{ \alpha(s) \Bigl[ U^\ast(s) - R(x|s)\Bigr] \Bigr\} } \\
  \label{eq:samp-rec}
  &= \sum_x Q(x|s) \biggl( \frac{Z(sx)}{e^{\alpha(sx) U^\ast(sx)}} \biggr)^\frac{ \alpha(sx) }{ \alpha(s) },
\end{align}
where we have defined $U^\ast(sx) := U^\ast(s) - R(x|s)$ as the target utility for the subtree rooted as $sx \in S$. Thus, \eqref{eq:samp-rec} says that obtaining a sample from an internal node $s \in S$ amounts to first picking a random transition $x \in \mathcal{X}$ and then obtaining $\alpha(sx)/\alpha(s)$ successful samples from the subtree rooted at $sx \in S$, which we know is an equivalent subtree having inverse temperature $\alpha(sx)$---see Figure~\ref{fig:smp-rec}b. We have already seen in Section~\ref{sec:equivalence} how to sample this operation using (modifications of) the rejection sampling algorithm. If $s \in S$ is a terminal node instead, then we can treat it as a normal single-step decision problem and sample a choice using rejection sampling.

\begin{figure}[p] 
\begin{algoframed}
  \caption{\textsc{sample}($s, U^\ast, \sigma$)}\label{alg:smp-rec}
 \IncMargin{2cm}
 \DontPrintSemicolon
 \SetKwInOut{Input}{input}\SetKwInOut{Output}{output}
 \Input{A state $s$, a target utility $U^\ast$, and a sign $\sigma$}
 \Output{An accepted trajectory $\tau$, otherwise $\epsilon$}

 \BlankLine
 \emph{Proposal:}\;
 $x \drawnfrom Q(x|s)$\;
 \BlankLine
 \emph{Base case:}\;
 \If{$sx$ is a leaf}{
  $u \drawnfrom \mathcal{U}(0,1)$\;
  $p \leftarrow \exp\{\alpha(s) (R(x|s)+F(sx)-U^\ast)\}$\;
  \lIf{$(\sigma > 0$ and $u \leq  p)\phantom{1/}$}{\Return $sx$}
  \lIf{$(\sigma < 0$ and $1/u \geq p)$}{\Return $sx$}
  \Return $\epsilon$\;
 }

 \BlankLine
 \emph{Recursion:}\;
 $\sigma \leftarrow \sigma \cdot \mathrm{sign}(\alpha(s)/\alpha(sx))$\;
 $\xi \leftarrow \mathrm{abs}(\alpha(s)/\alpha(sx))$\;

 \BlankLine
 \Begin(\emph{Attempt to generate $\lfloor \xi \rfloor$ successful samples.}){
 \For{$1, \ldots, \lfloor \xi \rfloor$}{
  $\tau \leftarrow \textsc{sample}(sx, U^\ast - R(x|s), \sigma)$\;
  \lIf{$\tau = \epsilon$}{\Return $\epsilon$}
 }
 \lIf{$\xi = \lfloor \xi \rfloor$}{\Return $\tau$}
 }
 
 \BlankLine
 \Begin(\emph{Attempt to generate $\xi - \lfloor \xi \rfloor$ successful samples.}){
 $u \drawnfrom \mathcal{U}(0,1)$\;
 $a \leftarrow \lfloor \xi \rfloor - \xi$\;
 set $b \leftarrow -1$, $f \leftarrow 0$, and $k \leftarrow 1$\;
 $\tau \leftarrow \textsc{sample}(sx, U^\ast - R(x|s), \sigma)$\;
 \While{$\tau = \epsilon$}{
  set $b \leftarrow -b \cdot (a-k+1) / k$, $f \leftarrow f + b$,
  and $k \leftarrow k + 1$\;
  $\tau \leftarrow \textsc{sample}(sx, U^\ast - R(x|s), \sigma)$\;
  \lIf{$u \leq f$}{\Return $\epsilon$}
 }
 \Return $\tau$\;
 } 
\end{algoframed}
\end{figure}

A detailed pseudocode for this recursive rejection sampling algorithm is listed in Algorithm~\ref{alg:smp-rec}. The function \textsc{sample}($s, U^\ast, \sigma$) takes as arguments a node $s \in S$; a target utility $U^\ast$ for the whole path; and a sign-flag $\sigma \in \{-1,+1\}$, which keeps track of whether we are using normal ($\sigma = +1$) or reciprocal probabilities ($\sigma = -1$) in the rejection sampling step due to the possible change of sign in the inverse temperature. It returns either an accepted path $\tau \in \mathcal{X}^T$ or $\epsilon$ if the proposal is rejected. Planning is invoked on the root node by executing \textsc{sample}($\epsilon, U^\ast, 1$), where $U^\ast$ is the global target utility. For this recursive sampler to return a path drawn from the correct distribution, the value of $U^\ast$ must be chosen so as to be larger than the rewards of any path, \textit{i.e.} for all $x_{\leq T} \in \mathcal{X}^T$,
\[
  U^\ast \geq \sum_{t=1}^T R(x_t|x_{<t}) + F(x_{\leq T})
\]
whenever the root node's inverse temperature $\alpha(\epsilon) > 0$ is strictly positive; or smaller than the rewards of any path whenever $\alpha(\epsilon) < 0$. 

Finally we note that, as in the single-step case, the agent can parallelize the recursive rejection sampler by simultaneously exploring many stochastically generated paths, and then return \emph{any} of the accepted ones. 

\section{Discussion}

\subsection{Relation to literature}

In this paper we have presented a summary of an information-theoretic model of bounded-rational decision-making that has precursors in the economic literature \citep{McKelvey1995, Mattsson2002, Wolpert2004, Sims2003, Sims2005, Sims2006, Sims2011} and that has emerged through the application of information-theoretic methods to stochastic control and to perception-action systems \citep[see \textit{e.g.}][]{Mitter2005, Kappen2005a, Todorov2006, Todorov2009, Theodorou2010, Theodorou2015, Still2009, Still2012, Broek2010, Friston2010, Peters2010, Tishby2011, Kappen2012, Neumann2012,  Rawlik2012, Rubin2012, Fox2012, Neumann2013, GrauMoya2013, Zaslavsky2015, Tanaka2015, Mohamed2015, Genewein2015, Leibfried2015}. In particular, the connection between the free energy functional and bounded rationality, the identification of the free energy extremum as the certainty-equivalent in sequential games, and the implementation of the utility-complexity trade-off based on sampling algorithms, were developed in a series of publications within this community \citep[see \textit{e.g.}][]{BraunOrtega2011, OrtegaBraun2011b, OrtegaBraun2013, Ortega2014a, OrtegaKimLee2015}\nocite{Ortega2011, OrtegaBraun2012, BraunOrtega2014, OrtegaLee2014}. The most distinguishing feature of this approach to bounded rationality is the formalization of resources in terms of information.

\paragraph{Historical roots.}
The problem of bounded rational decision-making gained popularity in the 1950s originating in the work by Herbert Simon \citep{Simon1956, Simon1972, Simon1984}. Simon proposed that bounded rational agents do not optimize, but \textit{satisfice}---that is, they do not search for the absolute best option, but rather settle for an option that is good enough. Since then the research field for bounded rational decision-making has considerably diversified leading to a split between optimization-based approaches and approaches that dismiss optimization as a misleading concept altogether \citep{Lipman1995, Russell1995a, Russell1995b, Aumann1997, Rubinstein1998, Gigerenzer2001}. In particular, the formulation as a constrained optimization problem is argued to lead to an infinite regress, and the paradoxical situation that a bounded rational agent would have to solve a more complex (\textit{i.e.} constrained) optimization problem than a perfectly rational agent. Information-theoretic bounded rationality provides a middle ground here, as the agent randomly samples choices, but does not incur into a meta-optimization, because the random search simply stops when resources run out. In fact, the equations for information-theoretic bounded rational agent can be interpreted as a stochastic satisficing procedure instantiated by rejection sampling.

\paragraph{KL-Control.}
From the vantage point of an external observer, information-theoretic bounded rational agents appear to trade off any gains in utility against the additional information-theoretic complexity as measured by the KL-divergence between a prior decision strategy and a posterior decision strategy after deliberation. Recently, a number of studies have suggested the use of the relative entropy as a cost function for control, which is sometimes referred to as \textit{KL-Control} \citep{Todorov2006, Todorov2009, Kappen2012}. In the work by \citet{Todorov2006}, the transition probabilities of a Markov decision process are controlled directly, and the control costs are given by the KL-divergence between the controlled dynamics and the passive dynamics described by a baseline distribution. This framework has also been extended to the continuous case, leading to the formulation of path integral control \citep{Kappen2005a, Theodorou2010}. Conceptually, the most important difference to the bounded-rational interpretation is that in KL-Control the stochasticity of choice is thought to arise from environmental passive dynamics rather than being a direct consequence of limited information capacities.

\paragraph{Psychology.}
In the psychological and econometric literature, stochastic choice rules have extensively been studied starting with \citet{Luce1959}, extending through \citet{McFadden1974}, \citet{Meginnis1976}, \citet{Fudenberg1993}, \citet{McKelvey1995} and \citet{Mattsson2002}. The vast majority of models has concentrated on \textit{logit choice models} based on the \textit{Boltzmann distribution} which includes the \textit{softmax rule} that is popular in the reinforcement learning literature \citep{Sutton1998}. \citet{McFadden1974} has shown that such Boltzmann-like choice rules can arise, for example, when utilities are contaminated with additive noise following an extreme value distribution. From the physics literature, it is also well-known that Boltzmann distributions arise in the context of variational principles in the free energy implying a trade-off between utility and entropic resource costs. The information-theoretic model of bounded rationality generalizes these ubiquitous logit choice models by allowing arbitrary prior distributions, as for example in \citet{McKelvey1995}. This corresponds to a trade-off in utility gains and additional resource costs incurred by deviating from the prior, which ultimately entails a variational principle in a free energy difference.

\paragraph{Economics.}
In the economic literature, variational principles for choice have been suggested in the context of \textit{variational preference models} \citep{Rustichini2006}. In variational preference models the certainty-equivalent value of a choice consists of two terms: the expected utility and an ambiguity index. A particular instance of the variational preference model is the \textit{multiplier preference model} where the ambiguity index is given by a Kullback-Leibler divergence \citep{Hansen2008}. In particular, it has been proposed that multiplier preference models allow dealing with model uncertainty, where the KL-divergence indicates the degree of model uncertainty. Free energy variational principles also appear in variational Bayesian inference. In this case the utility function is given by the negative log-likelihood and the free energy trade-off captures the transformation from Bayesian prior to posterior. The variational Bayes framework has recently also been proposed as a theoretical framework to understand brain function \citep{Friston2009, Friston2010} where perception is modeled as variational Bayesian inference over hidden causes of observations.

\paragraph{Computational approaches.}
While the free energy functional does not depend on domain-specific assumptions, the exact relationship between information-theoretic constraints and the standard measures of algorithmic complexity in computer science (\textit{e.g.} space and time) is not known. In contrast, the notion of \textit{bounded optimality} defines the optimal policy as the program that achieves the highest utility score on a particular machine given complexity constraints, and oftentimes relies on meta-reasoning for practical implementations \citep{Horvitz1989, Russell1995b}. This view has recently experienced a revival in computational neuroscience under the name of \textit{computational rationality} \citep{Lieder2014, Lewis2014, Griffiths2015, Gershman2015}. Another recent approach to model bounded resources with profound implications is \textit{space-time embedded intelligence} in which agents are treated as \textit{local computation patterns} within a global computation of the world \citep{Orseau2012}. It remains an interesting challenge for the future to extend the framework of information-theoretic bounded rationality to the realm of programs and to relate it to notions of algorithmic complexity. 

\comment{%
\subsection{Relationship to Bayesian inference}

\subsection{Relationship to minimax, Markovitz \& regret}

Real-life decisions are risk-sensitive. For instance, expected utility is indifferent between two
choices as long as they have the same expected utility. However, real-life investors also consider higher-order moments when designing their portofolios; such optimal portofolios mix assets in order to balance returns and risks.

Modern portfolio theory: One notable case that is widely used in practice is modern portfolio theory. Here, an investor trades off asset resutns versus porfolio risk, encoded into a regularisation term that is quadratic in the policy \citep{Markovitz1952}. The objective function blabla

Regret theory \citep{Fishburn1982, Bell1982, LoomesSugden1982} .

\subsection{Bounded-rational games}
} 

\subsection{Conclusions}

The original question we have addressed is: how do agents make decisions in very large and unstructured choice spaces? The need for solving this question is becoming increasingly critical, as these decision spaces are ubiquitous in modern agent systems. To answer this, we have formalized resource limitations as information constraints, and then replaced the objective function of subjective expected utility theory with the free energy functional. An advantage of the free energy functional is that its optimal solution has a clear operational interpretation. As a result, the optimal solution is a fully parallelizable stochastic choice strategy that strikes a trade-off between the utility and the search effort. 

Perhaps more fundamentally though, the theory lays out a general method to model reasoning under information constraints that arise \emph{as symptoms} of intractability, model uncertainty, or other causes. This feature becomes especially apparent in the sequential decision case, where a bounded-rational decision-tree captures an agent's dynamics of trust---both in its own ability to shape \& predict the future, and in the other players' intentions. We have seen that model uncertainty biases the value estimates of an agent, forcing it to pay attention to the higher-order moments of the utility.

For the sake of parsimony of the exposition, in this work we have refrained from elaborating on specific applications or extensions. There are obvious connections to Bayesian statistics that we have not fleshed out. Furthermore, the ideas outlined here can be applied to any choice process that is typically subject to information constraints, among them: attention focus and generation of random features; model selection and inference in probabilistic programming; and planning in active learning, Bayesian optimization, multi-agent systems and partially-observable Markov decision processes. The success of the theory will ultimately depend on its usefulness in these application domains.


\subsection*{Acknowledgments}

The authors would like to thank Daniel Polani, Bert J. Kappen, and Evangelos Theodorou, who provided innumerable insights during many discussions. This study was funded by the Israeli Science Foundation Center of Excellence, the DARPA MSEE Project, the Intel Collaborative Research Institute for Computational Intelligence (ICRI-CI), and the Emmy Noether Grant BR 4164/1-1.


\appendix

\section{Proofs}

\subsection{Proof to Theorem~\ref{theo:bernoulli}}
\begin{proof}
By expanding $h(p) = p^\xi$ around $p_0=1$, Taylor's theorem asserts that
\[
  p^\xi = 1 - \xi (1-p) + \frac{\xi (\xi-1)}{2!} (1-p)^2
    - \frac{\xi (\xi-1) (\xi-2)}{3!} (1-p)^3 + \cdots
\]
Since $0<\xi<1$, each term after the first is negative, hence
\begin{equation}\label{eq:bern1}
  p^\xi = 1 - \sum_{n=1}^\infty b_n (1-p)^n
  \qquad\text{where}\qquad
  b_n := (-1)^{n+1} \frac{ \xi (\xi-1) (\xi-2) \cdots (\xi-n+1) }{ n! }
\end{equation}
and where the $0 \leq b_n \leq 1$ are known a priori. Hence,
\begin{align*}
  1 - p^\xi
  &= \sum_{n=1}^\infty b_n (1-p)^n
  = \sum_{n=1}^\infty b_n \sum_{k=0}^\infty (1-p)^{n+k} p
  = \sum_{n=1}^\infty \left( \sum_{k=0}^n b_k \right) (1-p)^n p,
\end{align*}
where the second equality follows from multiplying the term $(1-p)^n$ with
\[
  1 = p \cdot p^{-1} = p \sum_{k=0}^\infty (1-p)^k
\]
and the third from a diagonal enumeration of the summands. Define $f_0 := 0$ and
$f_n := \sum_{k=0}^n b_k$ for $n \geq 1$. Since from~\eqref{eq:bern1},
\[
  1 = 1 - 0^\xi = \sum_{n=1}^\infty b_n,
\]
we know that $0 \leq f_n \leq 1$ for all $n \geq 0$ as well. Finally,
\[
  p^\xi = 1 - \sum_{n=1}^\infty f_n (1-p)^n p
  = \sum_{n=1}^\infty (1-f_n) (1-p)^n p
\]
corresponds to the expectation of $(1-f_n)$, where $n$ follows a Geometric
distribution with probability of success $p$. To obtain a Bernoulli random
variable $u$ with bias $p^\xi$, we can sample $n$ first and
then generate a Bernoulli random variable $u|n$ with bias $1-f_n$.
\end{proof}


\vskip 0.2in
\bibliographystyle{plainnat}
\bibliography{bibliography}

\end{document}